\definecolor{hanblue}{rgb}{0.27, 0.42, 0.81}
\definecolor{deepred}{HTML}{900C3F}
\definecolor{deepgreen}{HTML}{2F6960}
\theoremstyle{plain}
\newtheorem{theorem}{Theorem}
\newtheorem{proposition}[theorem]{Proposition}
\newtheorem{lemma}[theorem]{Lemma}
\theoremstyle{definition}
\newtheorem{assumption}[theorem]{Assumption}
\theoremstyle{remark}
\def\calP{{\mathcal{P}}}
\def\sR{{\mathbb{R}}}
\DeclareMathOperator*{\argmin}{arg\,min}
\DeclarePairedDelimiter\abs{\lvert}{\rvert}%
\DeclarePairedDelimiter\norm{\lVert}{\rVert}%
\let\oldabs\abs
\def\abs{\@ifstar{\oldabs}{\oldabs*}}
\let\oldnorm\norm
\def\norm{\@ifstar{\oldnorm}{\oldnorm*}}
\newcommand{\eps}{\varepsilon}
\newcommand{\dd}{\mathrm{d}}
\def\E{{\mathbb{E}}}
\def\Prob{{\mathbb{P}}}
\def\R{{\mathbb{R}}}
\newcommand*{\defeq}{\coloneqq}
\newcommand{\bff}{\mathbf{f}}
\newcommand{\ba}{\mathbf{a}}
\newcommand{\bx}{\mathbf{x}}
\newcommand{\bX}{\mathbf{X}}
\newcommand{\bY}{\mathbf{Y}}
\newcommand{\bg}{\mathbf{g}}
\newcommand{\bb}{\mathbf{b}}
\newcommand{\by}{\mathbf{y}}
\DeclareMathOperator{\Var}{Var}
\renewcommand{\bm}[1]{\mathbf{#1}}
\title{\resizebox{\textwidth}{!}{On {\kern-.05em}Fitting {\kern-.05em}Flow {\kern-.05em}Models {\kern-.05em}with {\kern-.05em}Large {\kern-.05em}Sinkhorn {\kern-.05em}Couplings}}
\author{
  Stephen Zhang\thanks{Equal contributions. Work done while doing an internship at Apple.}\,\;{\normalfont\textsuperscript{1,3}} \quad Alireza Mousavi-Hosseini\footnotemark[1]\,\;{\normalfont\textsuperscript{1,2}} \quad Michal Klein{\normalfont\textsuperscript{1}} \quad  Marco Cuturi{\normalfont\textsuperscript{1}}\\
  \textsuperscript{1}Apple \quad \textsuperscript{2}University of Toronto \quad \textsuperscript{3}University of Melbourne\\
  \texttt{michalk,stephen\_zhang5,cuturi@apple.com}\\
  \texttt{mousavi@cs.toronto.edu}\\
}
\begin{document}

\maketitle

\begin{abstract}
Flow models transform data gradually from one modality (e.g. noise) onto another (e.g. images).
Such models are parameterized by a time-dependent velocity field, trained to fit segments connecting pairs of source and target points. 
When the pairing between source and target points is given, training flow models boils down to a supervised regression problem. When no such pairing exists, as is the case when generating data from noise, training flows is much harder.
A popular approach lies in picking source and target points independently~\citep{lipman2023flow}. This can, however, lead to velocity fields that are slow to train, but also costly to integrate at inference time.
\textit{In theory}, one would greatly benefit from training flow models by sampling pairs from an optimal transport (OT) measure coupling source and target, since this would lead to a highly efficient flow solving the \citeauthor{benamou2000computational} dynamical OT problem. %
\textit{In practice}, recent works have proposed to sample \textit{mini-batches} of $n$ source and $n$ target points and reorder them using an OT solver to form \textit{better} pairs. These works have advocated using batches of size $n\approx 256$, and considered OT solvers that return couplings that are either sharp (using e.g. the Hungarian algorithm) or blurred (using e.g. entropic regularization, a.k.a. \citeauthor{Sinkhorn64}).
We follow in the footsteps of these works by exploring the benefits of increasing this mini-batch size $n$ by three to four orders of magnitude, and look more carefully on the effect of the entropic regularization $\varepsilon$ used in the \citeauthor{Sinkhorn64} algorithm. Our analysis is facilitated by new scale invariant quantities to report the sharpness of a coupling, while our sharded computations across multiple GPU or GPU nodes allow scaling up $n$.
We show that in both synthetic and image generation tasks, flow models greatly benefit when fitted with large \citeauthor{Sinkhorn64} couplings, with a low entropic regularization $\varepsilon$.
\end{abstract}

\section{Introduction}
Finding a map that can transform a source into a target measure is a task at the core of generative modeling and unpaired modality translation.
Following the widespread popularity of GAN formulations~\citep{goodfellow2014generative}, the field has greatly benefited from a gradual, time-dependent parameterization of these transformations as normalizing flows~\citep{rezende2015variational} and neural ODEs~\citep{chen2018neural}. Such flow models are now commonly estimated using flow matching~\citep{lipman2024flow}.
While a velocity formulation substantially increases the expressivity of generative models, this results on the other hand in a higher cost at inference time due to the additional burden of running an ODE solver.
Indeed, a common drawback of Neural-ODE solvers is that that they require potentially many steps, and therefore many passes through the flow network, to generate data.
In principle, to mitigate this problem, the gold standard for such continuous-time transformations is given by the solution of the~\citeauthor{benamou2000computational} dynamical optimal transport (OT) problem, which should be equivalent, if trained perfectly, to a 1-step generation achieved by the~\citeauthor{Monge1781} map formulation~\citep[\S1.3]{santambrogio2015optimal}.
In practice, while the mathematics~\citep{villani2003topics} of optimal transport have contributed to the understanding of these methods~\citep{liuflow}, the jury seems to be still out on ruling whether tools from the computational OT toolbox~\citep{Peyre2019computational}, which is typically used to compute large scale couplings on data~\citep{klein2025mapping}, can decisively help with the estimation of flows in high-dimensional / high-sample sizes regimes.

\paragraph{Stochastic interpolants.} The flow matching (FM) framework \citep{lipman2024flow}, introduced in concurrent seminal papers \citep{peluchetti2022nondenoising,lipman2023flow,albergo2023building,neklyudov2023action} proposes to estimate a flow model by leveraging a pre-defined interpolation $\mu_t$ between source $\mu_0$ and target $\mu_1$ measures --- the stochastic interpolant following the terminology of~\citeauthor{albergo2023building}. That interpolation is the crucial ingredient used to fit a parameterized velocity field with a regression loss. In practice, such an interpolation can be formed by sampling $X_0\sim \mu_0$ independently of $X_1\sim \mu_1$ and defining $\mu_t$ as the law of $X_t:= (1-t) X_0 + t X_1$. One can then fit a parameterized time-dependent velocity field $\mathbf{v}_\theta(t, \bx)$ that minimizes the expectation of $\| X_1 - X_0 -\mathbf{v}_\theta(X_T, T)\|^2$ w.r.t. $X_0,X_1$ and $T$ a random time variable in $[0,1]$.
This procedure (hereafter abbreviated as Independent-FM, I-FM) has been immensely successful, but can suffer from high variance, and as highlighted by~\citep{liu2022rectified} the I-FM loss can never be zero. Furthermore, minimizing it cannot recover an optimal transport path: the effect of this can be measured by noticing a high curvature when integrating the ODE needed to form an output from an input sample point $\bx_0$.
\paragraph{From I-FM to Batch-OT FM.}
To fit exactly the OT framework, ideally one would choose $\mu_t$ to be the \citeauthor{mccann1997convexity} interpolation between $\mu_0$ and $\mu_1$, which would be $\mu_t:=((1-t)\mathrm{Id}+tT^\star)_\#\mu_0$, where $T^\star$ is the \citeauthor{Monge1781} map connecting $\mu_0$ to $\mu_1$. Unfortunately, this insight is irrelevant, since knowing $T^\star$ would mean that no flow needs to be trained at all. Adopting a more practical perspective, ~\citet{pooladian2023multisample} and~\citet{tong2023simulation} have proposed to modify I-FM and select pairs of source and target points more carefully, using discrete OT solvers. Concretely, they sample mini-batches $\mathbf{x}_0^1\dots, \mathbf{x}_0^n$ from $\mu_0$ and $\mathbf{x}_1^1,\dots, \mathbf{x}_1^n$ from $\mu_1$; compute an $n\times n$ OT coupling matrix; sample pairs of indices $(i_\ell,j_\ell)$ from that bistochastic matrix, and feed the flow model with pairs $\mathbf{x}_0^{i_\ell},\mathbf{x}_1^{j_\ell}$. This approach, referred to as Batch OT-FM in the literature, was recently used and adapted in \citet{tian2024equiflow,generale2024conditional,klein2023equivariant,davtyanfaster,kim2024simple}. 
Despite their appeal, these modifications have not yet been widely adopted. The consensus stated recently by~\citet{lipman2024flow} seems to be still that \textit{"the most popular class of affine probability paths is instantiated by the independent coupling"}. 

\paragraph{Can mini-batch OT really help?}
We try to answer this question by noticing first that the evaluations carried out in all of the references cited above use batch sizes of $2^8=256$ points, more rarely $2^{10}=1024$, upper bounded by $2^{12}=4096$ for \cite{kim2024simple}. We believe that for many of these works this might be due to a reliance on the Hungarian algorithm~\citep{kuhn1955hungarian} whose $O(n^3)$ complexity is prohibitive for large $n$. We also notice that, while these works also consider entropic OT (EOT)~\citep{cuturi2013sinkhorn}, they stick to a single $\varepsilon$ regularization value in their evaluations (e.g. 0.2 \cite{kim2024simple}). We go back to the drawing board in this paper, and study whether batch OT-FM can reliably work, and if so at which regimes of mini-batch size $n$, regularization $\varepsilon$, and for which data dimensions $d$. Our contributions are:
\begin{itemize}[leftmargin=.3cm,itemsep=.05cm,topsep=0cm,parsep=2pt]
\item Rather than drawing an artificial line between Batch-OT (in Hungarian or EOT form) and I-FM, we leverage the fact that \textit{all} of these approaches can be interpolated using EOT: Hungarian corresponds to the case where $\varepsilon\rightarrow 0$ while I-FM is recovered with $\varepsilon\rightarrow \infty$. I-FM is therefore a particular case of Batch-OT with infinite regularization, which can be continuously moved towards batch-OT.
\item We modify the \citeauthor{Sinkhorn64} algorithm when used with the squared-Euclidean cost: we drop norms and only use negative dot-product. This improves stability and still returns the correct solution.
\item We define a renormalized entropy for couplings, to pin them efficiently on a scale of 0 (bijective assignment induced by a permutation, e.g. that returned by the Hungarian algorithm) to 1 (independent coupling). This quantity is useful because, unlike transport cost or entropy regularization $\varepsilon$, it is bounded in $[0,1]$ and does not depend on the data dimension $d$ or coupling size $n\times n$.
\item We explore in our experiments substantially different regimes for $n$ and $\varepsilon$. We vary the mini-batch size from $n=2^{11}=\num{2048}$ to $n=2^{21}=\num{2097152}$ and consider an adaptive grid to set $\varepsilon$ that results in \citeauthor{Sinkhorn64} couplings whose normalized entropy is distributed within $[0,1]$.
\end{itemize} 

\section{Background Material on Optimal Transport and Flow Matching}\label{sec:background}

Let $\calP_2(\R^d)$ denote the space of probability measures over $\R^d$ with finite second moment. Let $\mu, \nu \in \calP_2(\R^d)$, and let $\Gamma(\mu,\nu)$ be the set of joint probability measures in $\calP_2(\R^d\times\R^d)$ with left-marginal $\mu$ and right-marginal $\nu$. The OT problem in its~\citeauthor{Kantorovich42} formulation is:
\begin{align}\label{eq:wassdist}
    W_2(\mu,\nu)^2 \defeq \inf_{\pi \in \Gamma(\mu,\nu)}\iint \frac12 \|x-y\|^2\dd\pi(x,y)\,.
\end{align}
A minimizer of \eqref{eq:wassdist} is called an \emph{OT coupling measure}, denoted $\pi^\star$. If $\mu$ was a noise source and $\nu$ a data target measure, $\pi^\star$ would be the perfect coupling to sample pairs of noise and data to learn flow models: sample $\bx_0, \bx_1\sim \pi^\star$ and ensure the flow models bring $\bx_0$ to $\bx_1$. Such optimal couplings $\pi^\star$ are in fact induced by \emph{pushforward maps}: when paired optimally, a point $\bx_0$ can only be associated with a $\bx_1=T(\bx_0)$, where $T:\R^d\rightarrow\R^d$ is the \citeauthor{Monge1781} optimal transport map, defined as follows:
\begin{align}\label{eq:monge}
    T^\star(\mu,\nu) \coloneqq \argmin_{T: T_{\#}\mu=\nu}\int \frac12 \|\bx - T(\bx)\|^2\dd\mu(\bx)\,
\end{align}
 where the push-forward constraint $T_\# \mu=\nu$ means that for $X \sim \mu$ one has $T(X)\sim\nu$. \citeauthor{Monge1781} OT maps have been characterized by~\citeauthor{Bre91} in great detail:

\begin{theorem}[\citep{Bre91}]\label{thm:brenier_thm} If $\mu \in \calP_{2}(\R^d)$ has an absolutely continuous density then \eqref{eq:monge} is solved by a map $T^\star$ of the form $T^\star=\nabla u$, where $u:\R^d\rightarrow \R$ is convex. Moreover if $u$ is a convex potential that is such that $\nabla u_\#\mu=\nu$ then $\nabla u$ solves \eqref{eq:monge}.
\end{theorem}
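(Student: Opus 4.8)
The plan is to deduce the statement from the Kantorovich relaxation \eqref{eq:wassdist}, exploiting the algebraic structure of the quadratic cost. First I would record that $\Gamma(\mu,\nu)$ is weakly compact (tightness is inherited from the marginals) and that $\pi\mapsto\iint\tfrac12\|x-y\|^2\,\dd\pi(x,y)$ is weakly lower semicontinuous, so a Kantorovich minimizer $\pi^\star$ exists. Expanding $\tfrac12\|x-y\|^2=\tfrac12\|x\|^2-\langle x,y\rangle+\tfrac12\|y\|^2$ and using that the marginals are fixed with finite second moments, the two square-norm terms integrate to constants, so minimizing the transport cost over $\Gamma(\mu,\nu)$ is equivalent to \emph{maximizing} $\iint\langle x,y\rangle\,\dd\pi(x,y)$.

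Next I would invoke Kantorovich duality for this bilinear objective to obtain $\psi\in L^1(\nu)$ such that, setting $u(x)\defeq\sup_{y\in\R^d}\,[\langle x,y\rangle-\psi(y)]$, the equality $u(x)+\psi(y)=\langle x,y\rangle$ holds for $\pi^\star$-a.e.\ $(x,y)$. The function $u$ is convex and lower semicontinuous, being a supremum of affine functions, and it is finite $\mu$-a.e.\ because the transport cost is finite. Since by construction $u(x')+\psi(y)\ge\langle x',y\rangle$ for every $x'$, subtracting this from the equality at a support point shows $u(x')-u(x)\ge\langle x'-x,\,y\rangle$ for all $x'$, i.e.\ $y\in\partial u(x)$; hence $\mathrm{supp}\,\pi^\star\subseteq\partial u$.

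The decisive step — and the only place the hypothesis on $\mu$ enters — is that a finite convex function on $\R^d$ is differentiable outside a Lebesgue-null set; since $\mu$ is absolutely continuous and $u$ is finite $\mu$-a.e., $u$ is then differentiable $\mu$-a.e., so $\partial u(x)=\{\nabla u(x)\}$ for $\mu$-a.e.\ $x$. Combined with $\mathrm{supp}\,\pi^\star\subseteq\partial u$, this forces $\pi^\star=(\Id,\nabla u)_{\#}\mu$; in particular $\nabla u_{\#}\mu=\nu$, the Kantorovich value is attained by a deterministic coupling, and since the Monge infimum \eqref{eq:monge} is never smaller than the Kantorovich infimum \eqref{eq:wassdist}, the map $T^\star=\nabla u$ solves \eqref{eq:monge}. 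I expect the main obstacle to lie in making the duality / support step fully rigorous: no duality gap, existence of integrable dual potentials, and controlling the $\mu$-null set where $u$ could equal $+\infty$.

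For the converse (``moreover'') claim, suppose $u$ is convex with $\nabla u_{\#}\mu=\nu$ and put $\pi\defeq(\Id,\nabla u)_{\#}\mu\in\Gamma(\mu,\nu)$. The Fenchel--Young inequality gives $u(x)+u^{*}(y)\ge\langle x,y\rangle$ for all $x,y$, with equality when $y=\nabla u(x)$, so for any competitor $\tilde\pi\in\Gamma(\mu,\nu)$,
\begin{align*}
\iint\langle x,y\rangle\,\dd\pi &=\int u\,\dd\mu+\int u^{*}\,\dd\nu \\
&=\iint\bigl(u(x)+u^{*}(y)\bigr)\,\dd\tilde\pi\;\ge\;\iint\langle x,y\rangle\,\dd\tilde\pi ,
\end{align*}
where the second line uses that both mixed integrals only see the fixed marginals. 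Hence $\pi$ maximizes $\iint\langle x,y\rangle$, equivalently minimizes the quadratic cost, and undoing the reduction of the first paragraph shows $\nabla u$ solves \eqref{eq:monge}; the integrals are finite because the second-moment assumptions dominate $u$ and $u^{*}$ by quadratics on the relevant supports.
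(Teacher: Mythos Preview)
The paper does not provide its own proof of this statement: Theorem~\ref{thm:brenier_thm} is simply quoted with a citation to \cite{Bre91} and then used as a black box (to justify the synthetic benchmark in \S\ref{subsec:exp_synth1} and the equivariance Lemma in \S\ref{sec:newsink}). There is therefore nothing in the paper to compare your argument against.

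For what it is worth, your outline is the standard route to Brenier's theorem as presented e.g.\ in \cite{villani2003topics,santambrogio2015optimal}: reduce the quadratic cost to maximizing $\iint\langle x,y\rangle\,\dd\pi$ via the fixed marginals, use Kantorovich duality to produce a convex $u$ whose subdifferential contains the support of any optimal $\pi^\star$, and then exploit absolute continuity of $\mu$ together with a.e.\ differentiability of convex functions to force $\pi^\star=(\Id,\nabla u)_\#\mu$. The Fenchel--Young argument you give for the ``moreover'' direction is likewise the textbook one. The technicalities you flag (no duality gap, integrability of the potentials, the $\mu$-null set where $u$ may be $+\infty$ or nondifferentiable) are genuine and are exactly where the work lies in a full proof; your sketch correctly identifies but does not resolve them, which is appropriate for a proposal.
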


As a result of Theorem~\ref{thm:brenier_thm}, one can choose an arbitrary convex potential $u$, a starting measure $\mu$, and define a synthetic task to train flow matching models between $\mu_0:=\mu$ and $\mu_1:=\nabla u_{\#}\mu$, for which a ground truth coupling $\pi^\star$ is known. Inspired by~\citep{korotin2021neural} who considered the same result to benchmark \citet{Monge1781} map solvers, we use this setting in \S~\ref{subsec:exp_synth1} to benchmark batch-OT.

\paragraph{Entropic OT.}
Entropic regularization~\citep{cuturi2013sinkhorn} has become the most popular approach to estimate a finite sample analog of $\pi^\star$ using samples $(\bx_1,\ldots,\bx_n)$ and $(\by_1,\ldots,\by_n)$. 
Using a regularization strength $\varepsilon>0$, a cost matrix $\mathbf{C}:=[\tfrac12\|\bx_i - \by_j\|^2]_{ij}$ between these samples, the entropic OT (EOT) problem can be presented in primal and dual forms as:
\begin{equation}\label{eq:entdual}
    \min_{\mathbf{P}\in\R^{n\times n}_+,\mathbf{P}\mathbf{1}_n=\mathbf{P}^T\mathbf{1}_n=\mathbf{1}_n/n}\!\!\!\!\!\! \langle \mathbf{P},\mathbf{C} \rangle -\varepsilon H(\mathbf{P}), \quad
    \max_{\substack{ \bff, \bg \in \R^n }} \tfrac1n\langle \bff+\bg,\mathbf{1}_n \rangle - \eps \langle \exp\left(\tfrac{\bff\oplus\bg - \mathbf{C}}{\eps}\right),\mathbf{1}_{n\times n}\rangle,
\end{equation}
where $H(\mathbf{P}) = -\langle \mathbf{P}, \log(\mathbf{P}) \rangle$ is the discrete entropy functional.

\begin{wrapfigure}{r}{0.5\textwidth}
\begin{minipage}{0.5\textwidth}
\vspace{-15pt}
\begin{algorithm}[H]
\caption{\textsc{Sink}$(\bX\in\R^{n\times d}, \bY\in\R^{n\times d}, \varepsilon, \tau)$}
\label{algo:sinkhorn}
\begin{algorithmic}[1]
    \State{$\bff, \bg\leftarrow \mathbf{0}_n, \mathbf{0}_n$.}
    \State{$\mathbf{C}\leftarrow [\tfrac12\|\bx_i-\by_j\|^2]_{ij}, i\leq n, j\leq n$}\label{line:sink_change}
    \While{$\|\exp\left(\tfrac{\bff\oplus\bg - \mathbf{C}}{\eps}\right)\mathbf{1}_n-\tfrac1n\mathbf{1}_n\|_1 > \tau$}
        \State $\bff \leftarrow  \varepsilon\log \tfrac1n\mathbf{1}_n + \min_\varepsilon(\mathbf{C}-\bff\oplus\bg) + \bff$\label{line:minf}
        \State $\bg \leftarrow \varepsilon\log \tfrac1n\mathbf{1}_n + \min_\varepsilon(\mathbf{C}^\top-\bg\oplus\bff) + \bg$\label{line:ming}
    \EndWhile
    \State{{\bfseries return} {$\bff, \bg, \mathbf{P}=\exp\left((\bff\oplus\bg - \mathbf{C})/\eps\right)$ \label{lst:line:coupling}}}
\end{algorithmic}
\end{algorithm}
\vspace{-20pt}
\end{minipage}
\end{wrapfigure}

The optimal solutions to \eqref{eq:entdual} are usually found with the \citeauthor{Sinkhorn64} algorithm, as presented in Algorithm~\ref{algo:sinkhorn}, where for a matrix  
$\mathbf{S}$ we write $\min_\varepsilon(\mathbf{S}) \coloneqq [-\varepsilon \log\left( \mathbf{1}^\top e^{-\mathbf{S}_{i\cdot}/\varepsilon}\right)]_i$, and $\oplus$ is the tensor sum of two vectors, i.e. $(\bff\oplus \bg)_{ij} := \bff_i+\bg_j.$
The optimal dual variables \eqref{eq:entdual} $(\bff^\varepsilon,\bg^\varepsilon)$ can then be used to instantiate a valid coupling matrix $\mathbf{P}^\varepsilon = \exp\left((\bff^\varepsilon\oplus\bg^\varepsilon - \mathbf{C})/\eps\right)$, which approximately solves the finite-sample counterpart of \eqref{eq:wassdist}. An important remark is that as $\varepsilon\rightarrow0$, the solution $\mathbf{P}^\varepsilon$ converges to the optimal transport matrix solving \eqref{eq:wassdist}, while $\mathbf{P}^\varepsilon\rightarrow \tfrac{1}{n^2}\mathbf{1}_{n\times n}$ as $\varepsilon\rightarrow \infty$. These two limiting points coincide with the \textit{optimal assignment} matrix (or optimal permutation as returned e.g. by the Hungarian algorithm~\citep{kuhn1955hungarian}), and the uniform independent coupling used implicitly in I-FM.

\paragraph{Independent and Batch-OT Flow Matching.}
\begin{wrapfigure}{r}{0.48\textwidth}
\begin{minipage}{0.48\textwidth}
\vspace{-25pt}
\begin{algorithm}[H]
    \caption{\textsc{FM} 1-Step\label{alg:train}\smash{$(\mu_0,\mu_1,n,\textsc{OT-Solve})$}}
    \begin{algorithmic}[1]
        \State \label{step:1} $\bX_0 = (\bx^1_0,
        \hdots,\bx^n_0) \sim \mu_0$\label{line:x0}
        \State \label{step:2} $\bX_1 = (\bx^1_1,\hdots,\bx^n_1) \sim \mu_1$\label{line:x1}
        \State \label{step:3} $\mathbf{P} \leftarrow \textsc{OT-Solve}(\bX_0,\bX_1) \text{ or } \mathbf{I}_n/n$
        \State \label{step:4} $(i_1,j_1), \dots, (i_n,j_n) \sim \mathbf{P}$\label{line:sampling}
        \State \label{step:5} $t_1, \dots, t_n \leftarrow \textsc{TimeSampler}$
        \State \label{step:6} $\tilde{\bx}^k \leftarrow (1-t_k)\bx^{i_k}_0 + t_k \bx^{j_k}_1, \text{ for } k\leq n$ 
        \State \label{step:7} $\mathcal{L}(\theta) = \sum_{k}\|\bx^{j_k}_1 - \bx^{i_k}_0 - \mathbf{v}_\theta(\tilde{\bx}^k,t_k)\|^2$
        \State \label{step:8} $\theta \leftarrow \textsc{Gradient-Update}(\nabla\mathcal{L}(\theta))$
    \end{algorithmic}
\end{algorithm}
\end{minipage}
\vspace{-6pt}
\end{wrapfigure}
FM methods use a stochastic interpolant $\mu_t$ with law $X_t:= (1-t) X_0 + t X_1$, to minimize the expectation of a squared-norm regression loss $\min_\theta \mathbb{E}_{T,X_0,X_1}\|X_1 - X_0 -\mathbf{v}_\theta(X_T, T)\|^2$ where 
$X_0\sim \mu_0,X_1\sim\mu_1$ and $T$ a random variable in $[0,1]$. In I-FM, this interpolant is implemented by taking independent batches of samples $\bx_0^1\dots, \bx_0^n$ from $\mu_0$, $\bx_1^1,\dots, \bx_1^n$ from $\mu_1$, and $t_1,\dots,t_n$ time values sampled in $[0,1]$, to form the loss values  $\|\bx_1^k-\bx_0^k-\mathbf{v}_\theta((1-t_k) \bx_0^k + t_k \bx_1^k, t_k)\|^2$. 
In the formalism of \citet{pooladian2023multisample} and \citet{tong2023simulation}, the same samples $\bx_0^1\dots, \bx_0^n$ and $\bx_1^1,\dots, \bx_1^n$ are first fed into a discrete optimal matching solver. 
This outputs a bistochastic coupling matrix $\mathbf{P}\in\mathbb{R}^{n\times n}$ which is then used to \textit{re-shuffle} the $n$ pairs originally provided to be better coupled, and which should help the velocity field fit straighter trajectories, with less training steps. 
The procedure is summarized in Algorithm~\ref{alg:train} and adapted to our setup and notations. The choice $\mathbf{I}_n/n$ corresponds to I-FM, as it would return the original untouched pairs $(\bx_0^k, \bx_1^k)$.
Equivalently, I-FM would also be recovered if the coupling was the independent coupling $\mathbf{1}_{n\times n}/n^2$, up to the difference on carrying out stratified sampling (which would result in each noise/image observed once per mini-batch) or sampling with replacement. More recently, \citet{davtyanfaster} have proposed to keep a memory of that matching effort across mini-batches, by updating a large (of the size of the entire dataset) assignment permutation between noise and full-batch data that is locally refreshed with the output of the Hungarian method run on a small batch. 

\textbf{Batch-OT as an Enhanced Dataloader} A crucial aspect of the batch-OT methodology is that in its current implementations, any effort done to pair data more carefully with noise is disconnected from the training of $\mathbf{v}_\theta$ itself. Indeed, as currently implemented, OT variants of FM can be interpreted as meta-dataloaders that do a selective pairing of noise and data, without considering $\theta$ at all in that pairing. In that sense, training and preparation of coupled noise/data pairs can be done independently.

\section{Prepping Sinkhorn for Large Batch Size and Dimension.}\label{sec:newsink}

\textbf{On Using Large Batch Size and Selecting $\varepsilon>0$.} The motivation to use larger batch sizes for Batch-OT lies in the fundamental bias introduced by using small batches in light of the OT curse of dimensionality~\citep{chewi2024statistical,fatras2019learning}, which cannot be traded off with more iterations on the flow matching loss. Specifically, we provide the following lower bound that characterizes the statistical hardness of optimal transport, and defer its proof to the \Cref{app:proof}.
\begin{proposition}\label{prop:var_lower_bound_informal}
    Suppose the support of $\mu_1$ has intrinsic dimension $r$, formalized in Assumption \ref{assump:data}. Define the coupling $X_0,X_1 \sim \pi_n$ as follows: first draw $\bX_0 \sim \mu_0^{\otimes n}$ and $\bX_1 \sim \mu_1^{\otimes n}$, then sample $X_0,X_1 \sim \hat{\pi}_n(\bX_0,\bX_1)$ for any coupling rule $\hat{\pi}_n$ supported on $\bX_0,\bX_1$. Then, for any $\bx_0 \in \sR^d$,
    \[
    \Var_{X_0,X_1 \sim \pi_n}(X_1 \,|\, X_0 = \bx_0) \geq cn^{-2/r},
    \]
    where $c > 0$ is a constant depending only on $C$ and $r$ of Assumption \ref{assump:data}.
\end{proposition}
Note that the above proposition covers the case of using couplings that are supported on batches of noise and data, as in \Cref{algo:sinkhorn}. When $\mu_0$ admits a density, the conditional variance under exact OT would be zero. Thus, \Cref{prop:var_lower_bound_informal} shows the curse of dimensionality in learning optimal transport \emph{with any high-dimensional data distribution} $\mu_1$, which is in contrast to minimax lower bounds (e.g.\ \citet[Theorem 2.15]{chewi2024statistical}) that only show the hardness for \emph{some} unknown pair of distributions. This generality is at the expense of limiting the (stochastic) coupling to be supported on $(\bX_0,\bX_1)$, which is the relevant setting for flow matching. This curse of dimensionality becomes milder under the \emph{manifold hypothesis} where $r \ll d$, but still advocates for the use of large $n$.

The necessity of varying $\varepsilon$ is that this regularization can offset the bias between a regularized empirical OT matrix and its coupling measure counterpart, with favorable sample complexity~\citep{genevay2018sample,mena2019statistical,rigollet2025sample}.

\textbf{Automatic Rescaling of $\varepsilon$.}
A practical problem arising when running the \citeauthor{Sinkhorn64} algorithm lies in choosing the $\varepsilon$ parameter. As described earlier, while $\mathbf{P}^\varepsilon$ does follow a path from the optimal permutation (i.e., returned by the Hungarian algorithm) to the independent coupling, as $\varepsilon$ varies from 0 to $\infty$, what matters in practice is to pick relevant values in between these two extremes. To avoid using a fixed grid that risks becoming irrelevant as we vary $n$ and $d$, we revisit the strategy originally used in~\citep{cuturi2013sinkhorn} to divide the cost matrix $\mathbf{C}$ by its mean, median or maximal value, as implemented for instance in~\citep{flamary2021pot}. While needed to avoid underflow when instantiating a kernel matrix $\mathbf{K}=e^{-\mathbf{C}/\varepsilon}$, that strategy is not relevant when using the \textrm{log-sum-exp} operator in our implementation (as advocated in~\citep[Remark 4.23]{Peyre2019computational}), since the $\min_\varepsilon$ in our implementation is \textit{invariant} to a constant shift in $\mathbf{C}$, whereas mean, median and max statistics are not. We propose instead to use the \textit{standard deviation} (STD) of the cost matrix. Indeed, the dispersion of costs around their mean has more relevance as a scale than the mean of these costs itself. The STD can be computed in $nd^2$ time / memory, without having to instantiate the cost matrix. When this memory cost increase from $d$ to $d^2$ is too high, we subsample $n=2^{14}=\num{16384}$ points. In what follows, we always pass the $\varepsilon$ value to the \citeauthor{Sinkhorn64} algorithm \ref{algo:sinkhorn} as $\tilde{\varepsilon}:= \textrm{std}(\mathbf{C})\times \varepsilon$, where $\varepsilon$ is now a scale-free quantity selected in a logarithmic grid within $[0.001,1.0]$.

\textbf{Scale-Free Renormalized Coupling Entropy.}
While useful to keep computations stable across runs, the rescaling of $\varepsilon$ still does not provide a clear idea of whether a computed coupling $\mathbf{P}^\varepsilon$ from $n$ to $n$ points is sharp (close to an optimal permutation) or blurred (closer to what I-FM would use). While a distance to the independent coupling can be computed, that to the optimal Hungarian permutation cannot, of course, be derived without computing it beforehand which would incur a prohibitive cost. Instead, we resort to a fundamental information inequality used in~\citep{cuturi2013sinkhorn}: if $\mathbf{P}$ is a valid coupling between two marginal probability vectors $\ba,\bb$, then one has 
$\tfrac12(H(\ba)+H(\bb)) \leq H(\mathbf{P})\leq H(\ba)+H(\bb).$
As a result, for any $\varepsilon$, we define the \textit{renormalized} entropy $\mathcal{E}$ of a coupling of $\mathbf{a},\mathbf{b}$:
$$\mathcal{E}(\mathbf{P}):= \frac{2H(\mathbf{P})}{H(\ba)+H(\bb)} -1 \in (0,1].$$
When $\ba=\bb=\mathbf{1}_n/n$, as considered in this work, this simplifies to $\mathcal{E}(\mathbf{P}):=H(\mathbf{P})/\log n -1$. Independently of the size $n$ and $\varepsilon$, $\mathcal{E}(\mathbf{P}^\varepsilon)$ provides a simple measure of the proximity of $\mathbf{P}^\varepsilon$ to an optimal assignment matrix (as $\mathcal{E}$ gets closer to 0) or to the independent coupling (as $\mathcal{E}$ reaches 1). As a result we report $\mathcal{E}(\mathbf{P}^\varepsilon)$ rather than $\varepsilon$ in our plots (or to be more accurate, the \textit{average} of $\mathcal{E}(\mathbf{P}^\varepsilon)$ computed over multiple mini-batches). Figures \ref{fig:piecewise-final-app-entropy} and \ref{fig:korotin-final-app-epsilon} in the appendix are indexed by $\varepsilon$ instead.

\textbf{From Squared Euclidean Costs to Dot-products.}
Using the notation $T^\star(\mu,\nu)$ introduced in \eqref{eq:monge}, we notice an equivariance property of \citeauthor{Monge1781} maps. For $\mathbf{s}\in\R^d$ and $r\in\R_+$ we write $L_{r,\mathbf{s}}$ for the dilation and translation map $L_{r,\mathbf{s}}(\bx) = r \bx + \mathbf{s}$. Naturally, $L^{-1}_{r,\mathbf{s}}(\bx) =  (\bx - \mathbf{s})/r=L_{1/r,-\mathbf{s}/r}(\bx)$, but also $L_{r,s}=\nabla w_{r,s}$ where $w_{r,s}(\bx):=\tfrac{r}2\|\bx\|^2-\mathbf{s}^T\bx$ is convex.
\begin{lemma}
The \citeauthor{Monge1781} map $T(\mu,\nu)$ is equivariant w.r.t to dilation and translation maps, as
$$T^\star((L_{r,\mathbf{s}})_\#\mu, (L_{r',\mathbf{s}'})_\#\nu) = L_{r',\mathbf{s}'}\circ T^\star(\mu,\nu) \circ L^{-1}_{r,\mathbf{s}}.$$
\end{lemma}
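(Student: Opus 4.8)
The plan is to invoke \citeauthor{Bre91}'s characterization (Theorem~\ref{thm:brenier_thm}) twice: once to represent the left-hand side as the gradient of a convex potential, and once, in its uniqueness form, to identify the explicit candidate on the right-hand side as \emph{the} \citeauthor{Monge1781} map between the dilated/translated measures. Throughout I assume, as in Theorem~\ref{thm:brenier_thm}, that $\mu$ has an absolutely continuous density, and that $r,r'>0$ so that $L_{r,\mathbf{s}},L_{r',\mathbf{s}'}$ are diffeomorphisms (this is implicit in the formula for $L^{-1}_{r,\mathbf{s}}$). First I would write $T^\star(\mu,\nu)=\nabla u$ with $u$ convex, and introduce the candidate map
\[
S \;\defeq\; L_{r',\mathbf{s}'}\circ \nabla u \circ L^{-1}_{r,\mathbf{s}},
\qquad S(\bx) = r'\,\nabla u\!\left(\tfrac{\bx-\mathbf{s}}{r}\right) + \mathbf{s}'.
\]

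\textbf{Step 1: the marginal constraint.} Using functoriality of the pushforward and $L^{-1}_{r,\mathbf{s}}\circ L_{r,\mathbf{s}}=\Id$, I would compute
\[
S_\#\big((L_{r,\mathbf{s}})_\#\mu\big)
= (L_{r',\mathbf{s}'})_\#(\nabla u)_\#(L^{-1}_{r,\mathbf{s}})_\#(L_{r,\mathbf{s}})_\#\mu
= (L_{r',\mathbf{s}'})_\#(\nabla u)_\#\mu
= (L_{r',\mathbf{s}'})_\#\nu,
\]
where the last equality uses $\nabla u_\#\mu=\nu$ from Theorem~\ref{thm:brenier_thm}. So $S$ satisfies the push-forward constraint of \eqref{eq:monge} for the pair $\big((L_{r,\mathbf{s}})_\#\mu,(L_{r',\mathbf{s}'})_\#\nu\big)$.

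\textbf{Step 2: $S$ is a gradient of a convex function.} I would exhibit the potential $\phi(\bx):= r\,r'\, u\!\big(L^{-1}_{r,\mathbf{s}}(\bx)\big) + \mathbf{s}'^{\!\top}\bx$. Since $u$ is convex, $L^{-1}_{r,\mathbf{s}}$ is affine, $rr'>0$, and adding a linear term preserves convexity, $\phi$ is convex; the chain rule gives $\nabla\phi(\bx) = rr'\cdot\tfrac1r\nabla u\big(L^{-1}_{r,\mathbf{s}}(\bx)\big) + \mathbf{s}' = S(\bx)$. (Equivalently one may observe $L_{r',\mathbf{s}'}=\nabla w_{r',\mathbf{s}'}$, $L^{-1}_{r,\mathbf{s}} = \nabla w_{1/r,-\mathbf{s}/r}$, and that a composition/sum of gradients of convex functions of this affine type is again such a gradient.)

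\textbf{Step 3: conclude by uniqueness.} Because $L_{r,\mathbf{s}}$ is a diffeomorphism, $(L_{r,\mathbf{s}})_\#\mu$ still has an absolutely continuous density, so the hypotheses of Theorem~\ref{thm:brenier_thm} hold for the dilated/translated problem. Steps 1 and 2 show $S=\nabla\phi$ with $\phi$ convex and $\nabla\phi$ pushing $(L_{r,\mathbf{s}})_\#\mu$ onto $(L_{r',\mathbf{s}'})_\#\nu$; hence by the "moreover" part of Theorem~\ref{thm:brenier_thm}, $S$ solves \eqref{eq:monge}, i.e. $S = T^\star\big((L_{r,\mathbf{s}})_\#\mu,(L_{r',\mathbf{s}'})_\#\nu\big)$, which is exactly the claimed identity. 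The only real points requiring care — and thus the "main obstacle," though it is mild — are verifying that absolute continuity is preserved under $L_{r,\mathbf{s}}$ (so Brenier applies to the transformed pair) and correctly identifying the convex potential $\phi$ via the chain rule; everything else is bookkeeping with pushforwards.
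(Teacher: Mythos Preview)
Your proposal is correct and follows essentially the same strategy as the paper: write $T^\star(\mu,\nu)=\nabla u$, define the candidate $S=L_{r',\mathbf{s}'}\circ\nabla u\circ L^{-1}_{r,\mathbf{s}}$, check the pushforward constraint, show $S$ is the gradient of a convex potential, and conclude by the uniqueness part of \citeauthor{Bre91}'s theorem. The only difference is in how you verify Step~2: the paper argues that the Jacobian of $S$ is $\tfrac{r'}{r}\nabla^2 u(\cdot)$, hence symmetric positive semidefinite, and therefore $S$ must be the gradient of a convex function; you instead exhibit the potential $\phi(\bx)=rr'\,u\!\big(L^{-1}_{r,\mathbf{s}}(\bx)\big)+\mathbf{s}'^{\!\top}\bx$ directly and check convexity and $\nabla\phi=S$ by hand. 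Your route is slightly more elementary (it avoids implicitly assuming $u\in C^2$ and the step from ``symmetric PSD Jacobian'' to ``gradient of a convex function''), but the two arguments are otherwise identical in spirit.
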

\begin{proof} Following \citeauthor{Bre91}'s theorem, let $u$ be a convex potential such that $T^\star(\mu,\nu)=\nabla u$. Set $F:=L_{r',\mathbf{s}'}\circ \nabla u \circ L^{-1}_{r,\mathbf{s}}$. Then $F$ is the composition of the gradients of 3 convex functions. Because the Jacobians of $L_{r,s}$ and $L^{-1}_{r,\mathbf{s}}$ are respectively $r\mathbf{I}_d$ and $\mathbf{I}_d/r$, they commute with the Hessian of $u$. Therefore the Jacobian of $F$ is symmetric, positive definite, and $F$ is the gradient of a convex potential that pushes $(L_{r,\mathbf{s}})_{\#}\mu$ to $(L_{r',\mathbf{s}'})_{\#}\nu$, and is therefore their \citeauthor{Monge1781} map by \citeauthor{Bre91}'s theorem.
\end{proof}

In practice this equivariance means that, when focusing on permutation matrices (which can be seen as the discrete counterparts of \citeauthor{Monge1781} maps), one is free to rescale and shift either point cloud. This remark has a practical implication when running \citeauthor{Sinkhorn64} as well. When using the squared-Euclidean distance matrix, the cost matrix is a sum of a correlation term with two rank-1 norm terms, $\mathbf{C} = -\mathbf{X}^T\mathbf{Y} + \tfrac12(\boldsymbol{\xi}\mathbf{1}_n^T + \mathbf{1}_n\boldsymbol{\gamma}^T) $ where $\boldsymbol{\xi}$ and $\boldsymbol{\gamma}$ are the vectors composed of the $n$ squared norms of vectors in $\mathbf{X}$ and $\mathbf{Y}$. Yet, due to the constraints $\mathbf{P}\mathbf{1}_n=\mathbf{a}, \mathbf{P}^T\mathbf{1}_n=\mathbf{b},$ any modification to the cost matrix of the form 
$\tilde{\mathbf{C}}=\mathbf{C}-\mathbf{c}\mathbf{1}_n^T-\mathbf{1}_n\mathbf{d}^T,$ where $\mathbf{c}, \mathbf{d}\in \mathbb{R}^n$ only shifts the \eqref{eq:entdual} objective by a constant:
$\langle \mathbf{P}, \tilde{\mathbf{C}} \rangle = \langle \mathbf{P}, \mathbf{C} \rangle - \tfrac1n\mathbf{1}_n^T\mathbf{c} - \tfrac1n\mathbf{1}_n^T\mathbf{d}$.
In practice, this means that norms only perturb~\citeauthor{Sinkhorn64} without altering the optimal coupling, and one should focus on the negative correlation matrix $\mathbf{C}:=-\mathbf{X}^T\mathbf{Y}$, replacing Line \ref{line:sink_change} in Algorithm \ref{algo:sinkhorn}. We do observe substantial stability gains of these properly rescaled costs when comparing two point clouds (see Appendix~\ref{app:bettersinkhorn}).

\textbf{Warm-starting Sinkhorn.} Solving the EOT problem \eqref{eq:entdual} from scratch for each new batch of noise-data pairs $(\bm{X}_0, \bm{X}_1)$ is generally unnecessarily costly, since the solution is discarded each time a new batch is drawn. For large batch sizes, we propose to use the OT solution to $i$th batch $(\bm{X}_0^{(i)}, \bm{X}_1^{(i)})$ by warm-starting Sinkhorn for the $(i+1)$th batch $(\bm{X}_0^{(i+1)}, \bm{X}_1^{(i+1)})$. Let $(\bm{f}^\star,\bm{g}^\star)$ be the optimal dual potentials for a given batch $(\bm{X}, \bm{Y})$. Then, these potentials can be extended to the continuous domain:
\begin{align*}
    \textstyle\bm{f}(\bm{x}) &= \varepsilon \log \tfrac{1}{n} + \min_\varepsilon(\bm{C}(\bm{x}, \bm{y}_j) - \bm{g}_j), \\ 
    \textstyle\bm{g}(\bm{y}) &= \varepsilon \log \tfrac{1}{n} + \min_\varepsilon(\bm{C}(\bm{x}_i, \bm{y}) - \bm{f}_i). 
\end{align*}
For a new batch $(\bm{X}', \bm{Y}')$, we use the above formula to initialize the potentials $(\bm{f}', \bm{g}')$, i.e. $(\bm{f}', \bm{g}') \gets (\bm{f}(\bm{x}_i')_i, \bm{g}(\bm{y}_j')_j)$. Since \eqref{eq:entdual} is strictly convex, the choice of initialization has no influence on the solution. In practice, we find that warm-starting Sinkhorn substantially reduces the number of iterations required and the overall runtime of OTFM. We ablate the role of warmstart in \Cref{app:warmstart}.

\textbf{Computing Matchings in PCA Space.} With the dot-product cost we can further use Principal Component Analysis (PCA) to optimally reduce the dimensionality of the cost matrix and significantly speed up Sinkhorn computation. Let $\bx$ and $\by$ represent noise and data samples respectively, and let $\mathbf{A} \in \R^{k \times d}$ denote the projection matrix whose rows contain top-$k$ PCA directions. The PCA reconstruction of $\by$ is $\mathbf{A}^\top\mathbf{A}\by$, and
\[
    \bx^\top \by \approx \bx^\top\mathbf{A}^\top\mathbf{A}\by = \bar{\bx}^\top\bar{\by},
\]
where $\bar{\bx}$ and $\bar{\by}$ are the projection of $\bx$ and $\by$ onto the PCA subspace. Note that we can achieve this dimensionality reduction regardless of the structure of $\bx$, and this trick can be applied in the generative setting where $\bx$ is an isotropic Gaussian vector. This reduces the naive runtime of computing the cost matrix from $n^2d$ to $n^2k$. For large $n$, we compute the cost matrix on the fly per Sinkhorn iteration and avoid materializing the entire matrix at once, hence this reduction also occurs per iteration. In our experiments, we can achieve an almost 10x speedup in Sinkhorn computation from PCA, without sacrificing generation quality; see Appendix \ref{app:pca} for details.

\textbf{Precomputing Noise/Data Pairs.} We can completely separate the computational cost of preparing coupled noise/data pairs from the cost of training the model. To do so, as $n$ datapoints are retrieved from a dataloader and $n$ Gaussian samples are drawn, we can accumulate and buffer the outputs of Steps~\ref{step:1}--\ref{step:4} of \Cref{algo:sinkhorn} in a new augmented dataloader.
To avoid storing noise vectors, we generate each noise vector using a single PsuedoRandom Number Genearator (PRNG) key, and only store pairs of data identifier and the corresponding PRNG key for the coupled noise vector.
When training an FM model (Steps~\ref{step:5}--\ref{step:8} of \Cref{alg:train}), we load pairs of data identifier and PRNG key from this new dataloader, retrieve the corresponding data, and generate the noise using the key.
We use this approach while ablating any hyperparameters of FM training, to avoid Sinkhorn recomputations.

\textbf{Scaling Up \citeauthor{Sinkhorn64} to Millions of High-Dimensional Points.}
When guiding flow matching with batch-OT as presented in Algorithm~\ref{alg:train}, our ambition is to vary $n$ and $\varepsilon$ so that the coupling $\mathbf{P}^\varepsilon$ used to sample indices can be both large ($n\approx10^6$) and sharp if needed, i.e. with an $\varepsilon$ that can be brought to arbitrarily low levels so that $\mathcal{E}(\mathbf{P}^\varepsilon)\approx 0$. To that end, we leverage the OTT-JAX implementation of the \citeauthor{Sinkhorn64} algorithm \citep{cuturi2022optimal}, which can be natively sharded across multi-GPUs, or more generally multiple nodes of GPU machines equipped with efficient interconnect. In that approach, inspired by the earlier mono-GPU implementation of~\citep{feydy2020analyse}, all $n$ points from source and target are sharded across GPUs and nodes (we have used either 1 or 2 nodes of 8 GPUs each, either NVIDIA H100 or A100). A crucial point in our implementation is that the cost matrix $\mathbf{C}=-\mathbf{X}\mathbf{Y}^T$ (following remark above) is never instantiated globally. Instead, it is recomputed at each $\min_\varepsilon$ operation in Lines~\ref{line:minf} and~\ref{line:ming} of Algorithm~\ref{algo:sinkhorn} locally, for these shards. All sharded results are then gathered to recover $\bff,\bg$ newly assigned after that iteration. When outputted, we use $\bff^\varepsilon$ and $\bg^\varepsilon$ and, analogously, never instantiate the full $\mathbf{P}^\varepsilon$ matrix (this would be impossible at sizes $n\approx 10^6$ we consider) but instead, materialize it blockwise to do stratified index sampling corresponding to Line \ref{line:sampling} in Algorithm~\ref{alg:train}. We use the Gumbel-softmax trick to vectorize the categorical sampling of each of these lines to select, for each line index $i$, the corresponding column $j_i$.

\section{Experiments}\label{sec:experiments}
We revisit the application of Algorithm~\ref{alg:train} using the modifications to the \citeauthor{Sinkhorn64} algorithm outlined in Section~\ref{sec:newsink} to consider various benchmark tasks for which I-FM has been used. We consider synthetic tasks in which the ground-truth \citeauthor{Monge1781} map is known, and benchmark unconditioned image generation using CIFAR-10~\citep{krizhevsky2009learning}, and the 32$\times$32 and 64$\times$64 downsampled variants~\citep{chrabaszcz2017downsampled} of the ImageNet dataset~\citep{deng2019imagenet}.

\textbf{\citeauthor{Sinkhorn64} Hyperparameters.}
To track accurately whether the \citeauthor{Sinkhorn64} algorithm converges for low $\varepsilon$ values, we set the maximal number of iterations to \num{50000}. We use the adaptive momentum rule introduced in~\citep{lehmann2022note} beyond $2000$ iterations to speed-up harder runs. Overall, all of the runs below converge: even for low $\varepsilon$, we achieve convergence except in a very few rare cases. The threshold $\tau$ is set to $0.001$ and we observe that it remains relevant for all dimensions, as we use the 1-norm to quantify convergence.

\subsection{Evaluation Metrics To Assess the Quality of a Flow Model $\mathbf{v}_\theta$}
All metrics used in our experiments can be interpreted as \textit{lower is better}.

\textbf{Negative log-likelihood.} Given a trained flow model $\bm{v}_\theta(t, \bm{x})$, the density $p_t(\bm{x})$ obtained by pushing forward $p_0(\bm{x})$ along the flow map of $\bm{v}_\theta$ can be computed by solving 
\begin{equation}
    \log p_t(\bm{x}_t) = \log p_0(\bm{x}_0) - \int_0^1 (\nabla_x \cdot \bm{v}_\theta)(t, \bm{x}_t) \: \mathrm{d} t, \qquad \dot{\bm{x}}_t = \bm{v}_\theta(t, \bm{x}_t), \label{eq:density_ODE}
\end{equation}
Similarly, given a pair $(t, \bm{x})$, the density $p_t(\bm{x})$ can be evaluated by backward integration \cite[Section 2.2]{grathwohl2018ffjord}. The divergence $(\nabla_x \cdot \bm{v}_\theta)(t, \bm{x}_t)$ requires computing the trace of the Jacobian of $\mathbf{v}_\theta(t,\cdot)$. As commonly done in the literature, we use the Hutchinson trace estimator with a varying number of samples to speed up that computation without materializing the entire Jacobian and use either an \texttt{Euler} solver with 50 steps for synthetic tasks or a \texttt{Dopri5} adaptive solver for image generation tasks, both implemented in the Diffrax toolbox~\citep{kidger2021on}. Given $n$ points $\bm{x}_1^1,\dots,\bm{x}_1^n\sim\nu$, the negative log-likelihood (NLL) of that set is 
$$\mathcal{L}(\theta) := -\frac1n \sum_{i = 1}^n \log p_1(\bm{x}^i_1).$$ subject to \eqref{eq:density_ODE}. We alternatively report the bits per dimension (BPD) statistic, given by $\mathrm{BPD} = \mathcal{L} / (d\log 2)$.

\textbf{Curvature.} We use the \emph{curvature} of the field $\bm{v}_\theta$ as defined by \citep{lee2023minimizing}: for $n$ integrated trajectories $(\bm{x}_t^{1}, \dots, \bm{x}_t^n)$ starting from samples at $t=0$ from $\mu$, the curvature is defined as 
$$\kappa(\theta) := \frac1n \sum_{i = 1}^N \int_0^1 \| \bm{v}_\theta(t, \bm{x}_t^{i}) - (\bm{x}_1^{i} - \bm{x}_0^{i}) \|_2^2 \mathrm{d} t,$$
where the integration is done with an \texttt{Euler} solver with 50 steps for synthetic tasks and the \texttt{Dopri5} solver evaluated on a grid of 8 steps for image generation tasks. The smaller the curvature, the more the ODE path looks like a straight line, and should be easy to integrate.

\textbf{Reconstruction loss.} For synthetic tasks in Sections~\ref{subsec:exp_synth1}, we have access to the ground-truth transport map $T_0$ that generated the target measure $\mu_1$ from $\mu_0$. In both cases, that map is parameterized as the gradient of a convex \citeauthor{Bre91} potential, respectively a piecewise quadratic function and an input convex neural network, ICNN~\citep{amos2017input}. For a starting point $\mathbf{x}_0$, we can therefore compute a \textit{reconstruction loss} (a variant of the $\mathcal{L}^2$-UVP in~\cite{korotin2021neural}) as the squared norm of the difference between the true map $T^\star(\mathbf{x}_0)$ and the flow map $T_\theta$ obtained by integrating $\bm{v}_\theta(t, \cdot)$ (using a varying number of steps with a \texttt{Euler} solver or with the \texttt{Dopri5} solver), defined using $n$ points sampled from $\mu$ as
$$\mathcal{R}(\theta) := \frac1n \sum_{i = 1}^n \: \| T_\theta(\bm{x}^i_0) - T_0(\bm{x}^i_0) \|_2^2\,.$$ 

\textbf{FID.} We report the FID metric \citep{heusel2017gans} in image generation tasks. For CIFAR-10 we use the train dataset of \num{50000} images, for ImageNet-32 and ImageNet-64  we subset a random set of \num{50000} images from the train set. For generation we consider four integration solvers, \texttt{Euler} with 4, 8 and 16 steps and a \texttt{Dopri5} solver from the Diffrax library~\citep{kidger2021on}.

\subsection{Synthetic Benchmark Tasks, $d=32\sim 256$}\label{subsec:exp_synth1}
We consider in this section synthetic benchmarks of medium dimensionality ($d=32\sim256$). We favor this synthetic setting over other data sources with similar dimensions (e.g. PCA reduced single-cell data~\citep{bunne2024optimal}) in order to have access to the ground-truth reconstruction loss, which helps elucidate the impact of OT batch size $n$ and $\varepsilon$.

\textbf{Piecewise Affine \citeauthor{Bre91} Map.} The source is a standard Gaussian and the target is obtained by mapping it through the gradient of a potential, itself a (convex) piecewise quadratic function obtained using the pointwise maximum of $k$ rank-deficient parabolas:
\begin{equation}\label{eq:quadpots}
u(\bx):=\max_{i\leq k} u_i(\bx) := \tfrac12 \|\bx\|^2 + \tfrac12\|\mathbf{A}_i(\bx-\mathbf{m}_i)\|^2 - \|\mathbf{A}_i\mathbf{m}_i\|^2\,,
\end{equation}
 where $\mathbf{A}_i\sim \textrm{Wishart}(\tfrac{d}{2},I_d), \mathbf{m}_i\sim \mathcal{N}(0,3I_d), c_i\sim \mathcal{N}(0,1)$ and all means are centered around zero after sampling. In practice, this yields a transport map of the form $\nabla u(\bx)= \bx + \mathbf{A}_{i^\star}(\bx-\mathbf{m}_{i^\star})$ where $i^\star$ is the potential selected for that particular $\bx$ (i.e. the argmax in \eqref{eq:quadpots}). The correction $- \|\mathbf{A}_i\mathbf{m}_i\|^2$ is designed to ensure that these potentials are sampled equally even when $\mathbf{m}_i$ is sampled far from 0. The number of potentials $k$ is set to $d/16$. Examples of this map are shown in Appendix~\ref{app:example_gaussian} for dimension $128$. We consider this setting in dimensions $d=32,64,128,256$.

\textbf{\citeauthor{korotin2021neural} Benchmark.}
The source is a predefined Gaussian mixture and the ground-truth OT map is a pre-trained ICNN. We consider this benchmark in various dimensions $d=32\sim256$, using their ICNN checkpoints. This problem is more challenging than the previous one, because both the source \emph{and} target distributions have multiple modes, and the OT map itself is a fairly complex ICNN.

\begin{wrapfigure}{r}{0.45\textwidth}
    \centering
    \begin{minipage}[c]{\linewidth}
    \centering
    \hspace{1cm}
    \input{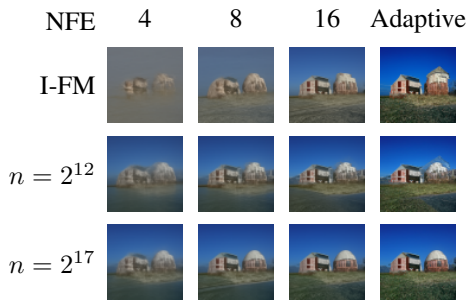}    
    \caption{Samples generated from models trained on \textbf{ImageNet-64}. $n$ denotes the total OT batch size. We use $\varepsilon=0.1$ and the \texttt{Euler} solver (\texttt{Dopri5} for adaptive with NFE $\approx 270$). More samples provided in \Cref{fig:imagenet64_grid}.}
    \label{fig:imagenet64_grid_samples}
    \end{minipage}
    \vspace{-5mm}
\end{wrapfigure}
\textbf{Velocity Field Parameterization and Training.}
The velocity fields are parameterized as MLPs with 5 hidden layers, each of size $512$ when $d=32,64$ and $1024$ when $d=128,256$. Time in $[0,1]$ is encoded using $d/8$ Fourier encodings. All models are trained with unpaired batches: the sampling in Line \ref{line:x0} of Algorithm~\ref{alg:train} is done as $\mathbf{X}_0\sim \mu$ while for Line \ref{line:x1}, $\mathbf{X}_1:=T_0(\mathbf{X}_0')$ where $\mathbf{X}_0'$ is a new sample from $\mu$ and $T$ is applied to each of the $n$ points described in $\mathbf{X}_0'$. All models are trained for 8192 steps, with effective batch sizes of 2048 samples (256 per GPU) to average a gradient, a learning rate of $10^{-3}$ (we tested with $10^{-2}$ or $10^{-4}$, the former was unstable while the latter was less efficient on a subset of runs). The model marked as \textcolor{ForestGreen}{$\blacktriangle$} in the plots is a flow model trained with \textit{perfect} supervision, i.e. given \textit{ground-truth paired samples} $\mathbf{X}_0\sim \mu$ and $\mathbf{X}_1:=T_0(\mathbf{X}_0)$, provided in the correct order. I-FM is marked as \textcolor{red}{$\blacktriangledown$}. For all other runs, we vary $\varepsilon$ (reporting renormalized entropy $\mathcal{E}(\mathbf{P}^\varepsilon)$) and the total batch size $n$ used to compute couplings, somewhere between $2048$ and \num{2097152}. These runs are carried out on a single node with 8 GPUs, and therefore the data is sharded in blocks of size $n/8$ when running the \citeauthor{Sinkhorn64} algorithm.

\textbf{Results.} The results displayed in Figures \ref{fig:piecewise} and \ref{fig:korotin} paint a homogeneous picture: as can be expected, increasing $n$ is generally impactful and beneficial for all metrics. The interest of decreasing $\varepsilon$, while beneficial in smaller dimensions, can be less pronounced in higher dimensions. Indeed, we find that renormalized entropies around $\approx 0.1$ should be advocated, if one has in mind the computational effort needed to get these samples, pictured at the bottom of each figure.

\begin{figure}
\centering
\!\!\!\!\!\!\!\includegraphics[width=1.05\linewidth]{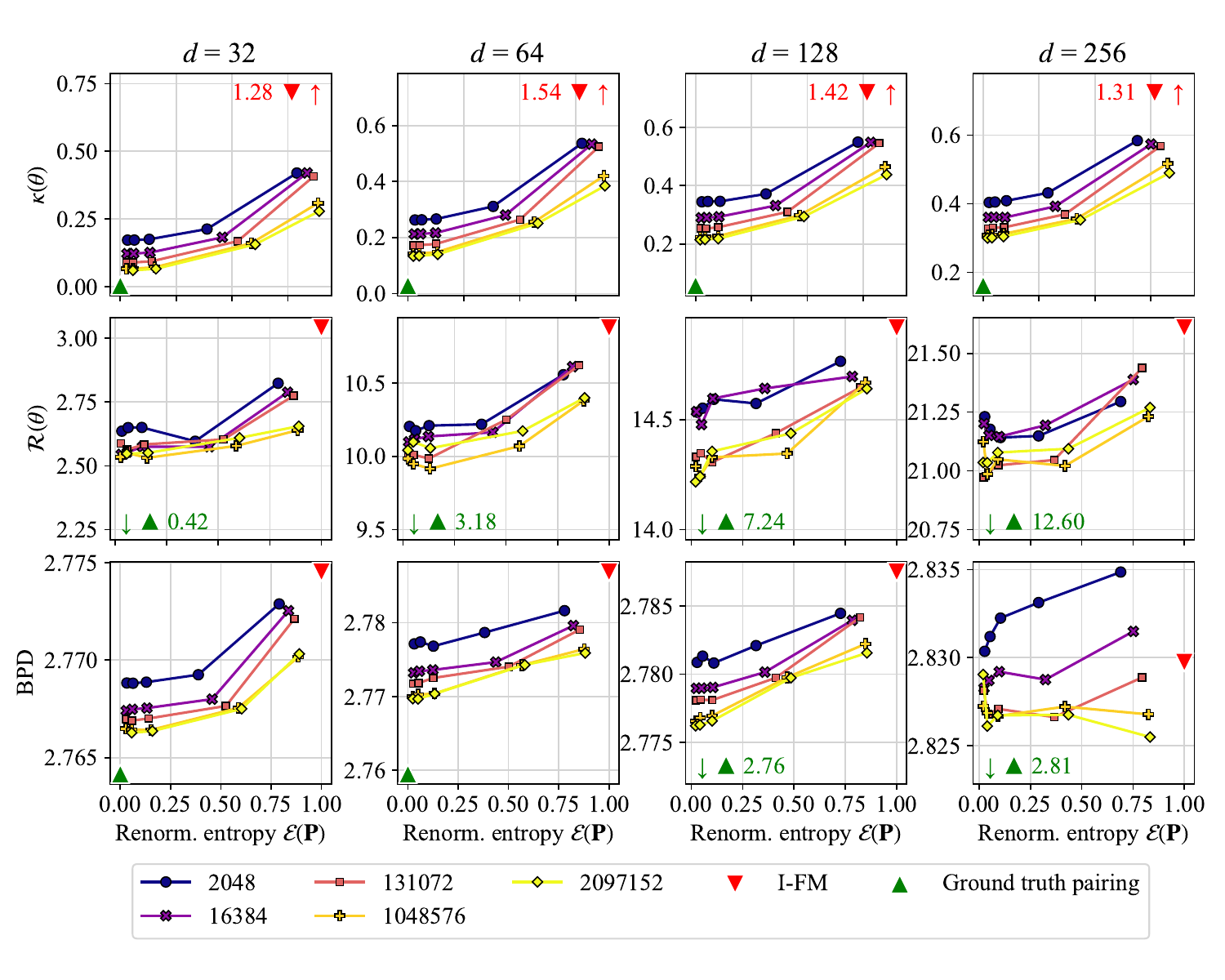} 
\vskip-.4cm
\includegraphics[width=\linewidth]{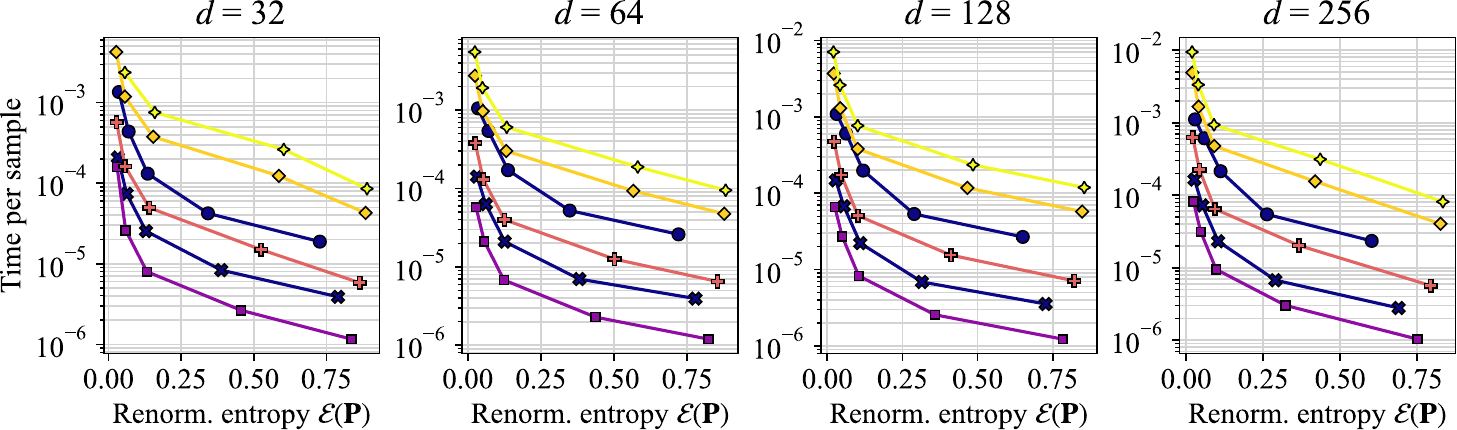} 
\caption{Results on the \textbf{piecewise affine OT Map benchmark}. The three top rows present (in that order) curvature, reconstruction and BPD metrics. Below, we provide compute times associated with running the \citeauthor{Sinkhorn64} algorithm as a per-example cost. This per-example cost is the total time needed to run \citeauthor{Sinkhorn64} to get $n\times n$ coupling divided by $n$. That cost would be 0 when using I-FM. We observe across all dimensions improvements of all metrics.}\label{fig:piecewise}
\end{figure}

\begin{figure}
\!\!\!\!\!\!\!\includegraphics[width=1.05\linewidth]{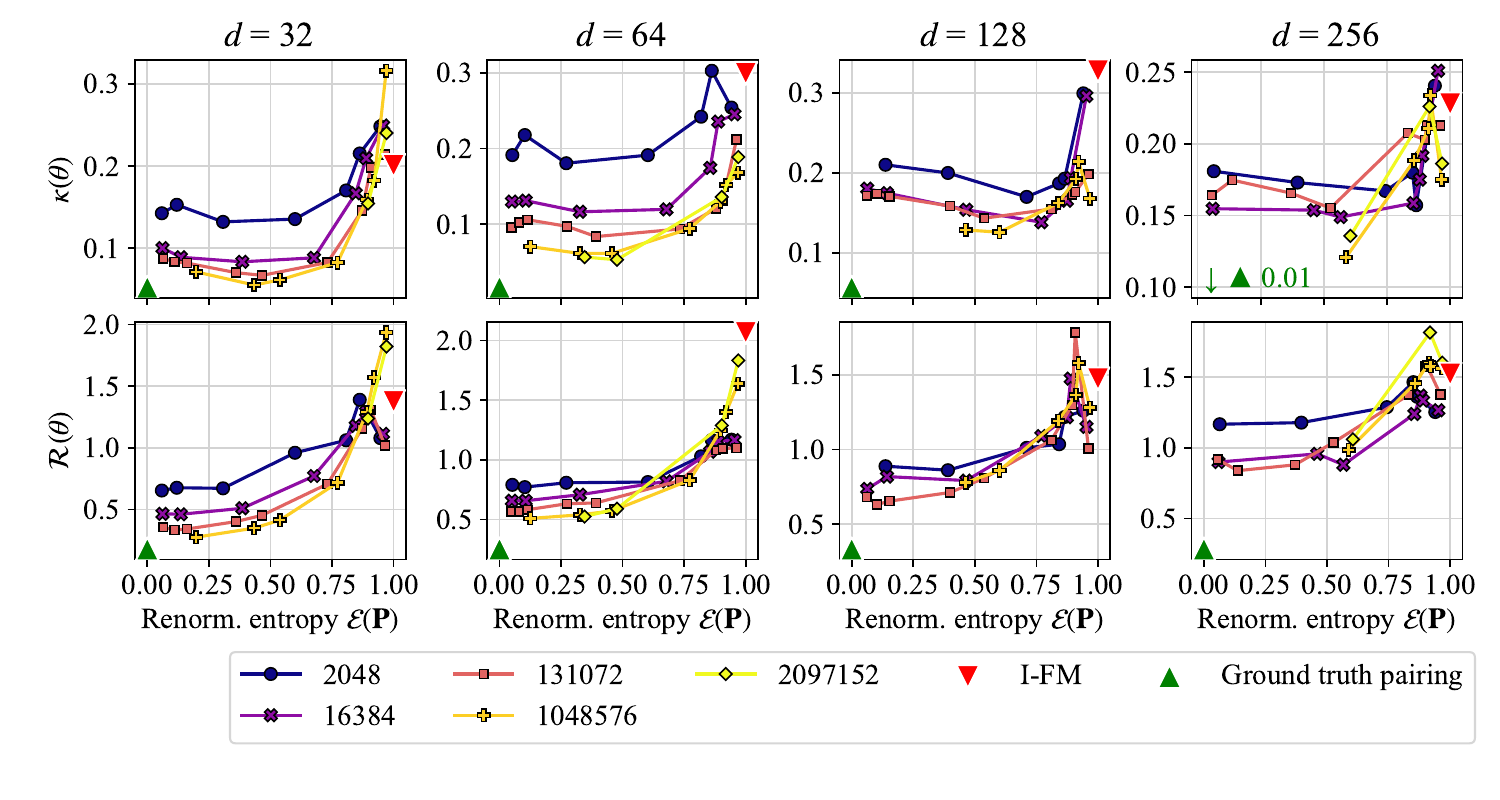}
\vskip-.4cm
\includegraphics[width=\linewidth]{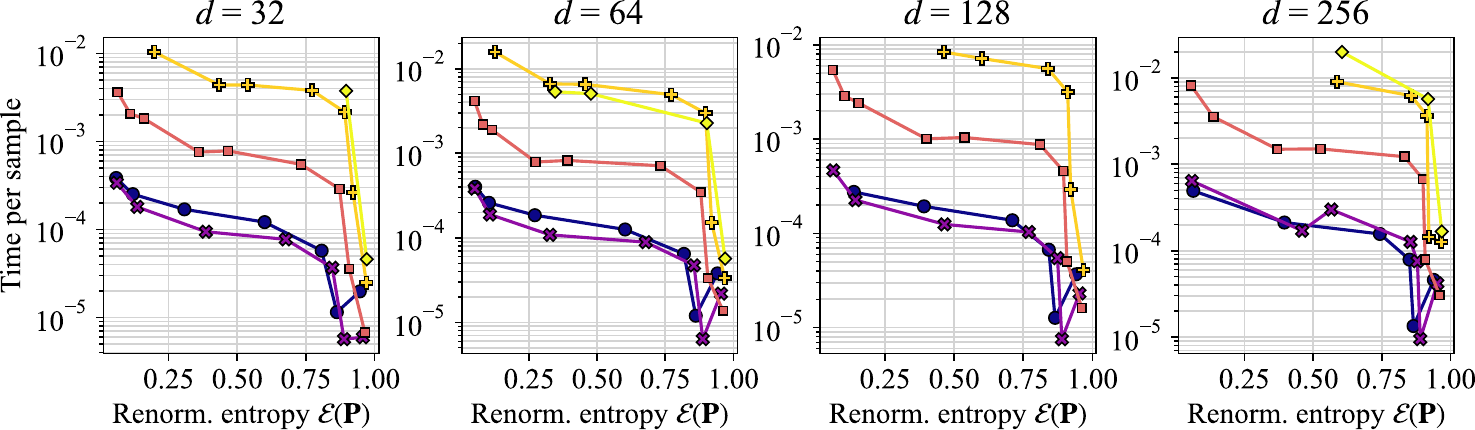} 
\caption{Results on the \textbf{Korotin benchmark}. As with Figure~\ref{fig:piecewise}, we compute curvature and     reconstruction metrics, and compute times below. Some of the runs for largest OT batch sizes $n$ are provided in the supplementary. These runs suggest that to train OT models in these dimensions increasing $n$ is overall beneficial across the board.}\label{fig:korotin}
\end{figure}

\subsection{Unconditioned Image Generation, $d=3072 \sim 12288$.}
\begin{table}[t]
    \renewcommand{\arraystretch}{1.3}
    \centering
    \begin{tabular}{cccccccccc}
        \toprule
        \multicolumn{5}{c}{\textbf{ImageNet-32}} & \multicolumn{5}{c}{\textbf{ImageNet-64}}\\
        \cmidrule(lr){1-5} \cmidrule(lr){6-10}
        \makecell{NFE $\rightarrow$ \\ $n \downarrow$} & 4 & 8 & 16 & \makecell{Adaptive \\ $115 \pm 1$} & \makecell{NFE $\rightarrow$ \\ $n \downarrow$} & 4 & 8 & 16 & \makecell{Adaptive \\ $269 \pm 1$}
        \\
        \hline
        I-FM & $66.4$ & $24.3$ & $12.1$ & $5.55$ & I-FM & $80.1$ & $37.0$ & $19.5$ & $9.32$ \\
        $2048$ & $38.2$ & $16.8$ & $10.0$ & $5.89$ & $4096$ & $50.3$ & $25.0$ & $15.8$ & $9.39$\\
        $65536$ & $33.1$ & $15.1$ & $9.28$ & $4.88$ & $32768$ & $48.8$ & $24.6$ & $15.7$ & $9.08$\\
        $524288$ & $\mathbf{31.5}$ & $\mathbf{14.8}$ & $\mathbf{9.19}$ & $\mathbf{4.85}$ & $131072$ & $\mathbf{46.9}$ & $\mathbf{23.9}$ & $\mathbf{15.4}$ & $\mathbf{8.99}$\\
        \bottomrule
    \end{tabular}
    \vspace{2mm}
    \caption{FID for models trained across different OT batch sizes. We use the best checkpoint (w.r.t FID at \texttt{Dopri5}) for each model, restricting results to the setting where the relative \texttt{epsilon} value $\varepsilon=0.1$ for ease of presentation (more detailed results can be seen in the plots of Figure~\ref{fig:imagenet32_results}).}
    \label{tab:fid_main}
    \vspace*{-5mm}
\end{table}

As done originally in~\citep{lipman2023flow}, we consider unconditional generation of the CIFAR-10, ImageNet-32 and ImageNet-64 datasets.

\textbf{Velocity Field Parameterization and Training.} We use the network parameterization given in~\citep{tong2024improving} for CIFAR-10 and those given in~\citep{pooladian2023multisample} for ImageNet 32 and ImageNet 64. We follow their recommendations on setting learning rates, batch sizes (to average gradients) as well as total number of iterations: we train respectively for 400k, 438k and 957k  using effective batch sizes advocated in their paper, respectively $16 \times 8$, $128\times 8$ and $50\times 16$. 

\textbf{CIFAR-10.} Results are presented in Figure~\ref{fig:cifar10}. Compared to results reported in~\citep{tong2023simulation} we observe slightly better FID scores (about 0.1) for both I-FM and Batch OT-FM. Note that the size of the dataset itself (50k, 100k when including random flipping as we do) is comparable (if not slightly lower) to our largest batch size $n=\num{131072}$, meaning some images are duplicated. Overall, the results show the benefit of relatively larger batch sizes and suitably small $\varepsilon$, that is more pronounced at lower NFE.

\textbf{ImageNet-32 and ImageNet-64.} Results are shown in Figures~\ref{fig:imagenet32_results} and ~\ref{fig:imagenet64_results}. Compared to results reported in~\citep{tong2023simulation} we observe slightly better FID scores (about 0.1 when using the \texttt{Dopri5} solver for instance) for I-FM. Compared to CIFAR-10, these datasets are more suitable for our large OT batch sizes as they contain significantly more samples, and we continue to observe the benefits of larger batch size and proper choice of renormalized entropy.

\begin{figure}
\includegraphics[width=\linewidth]{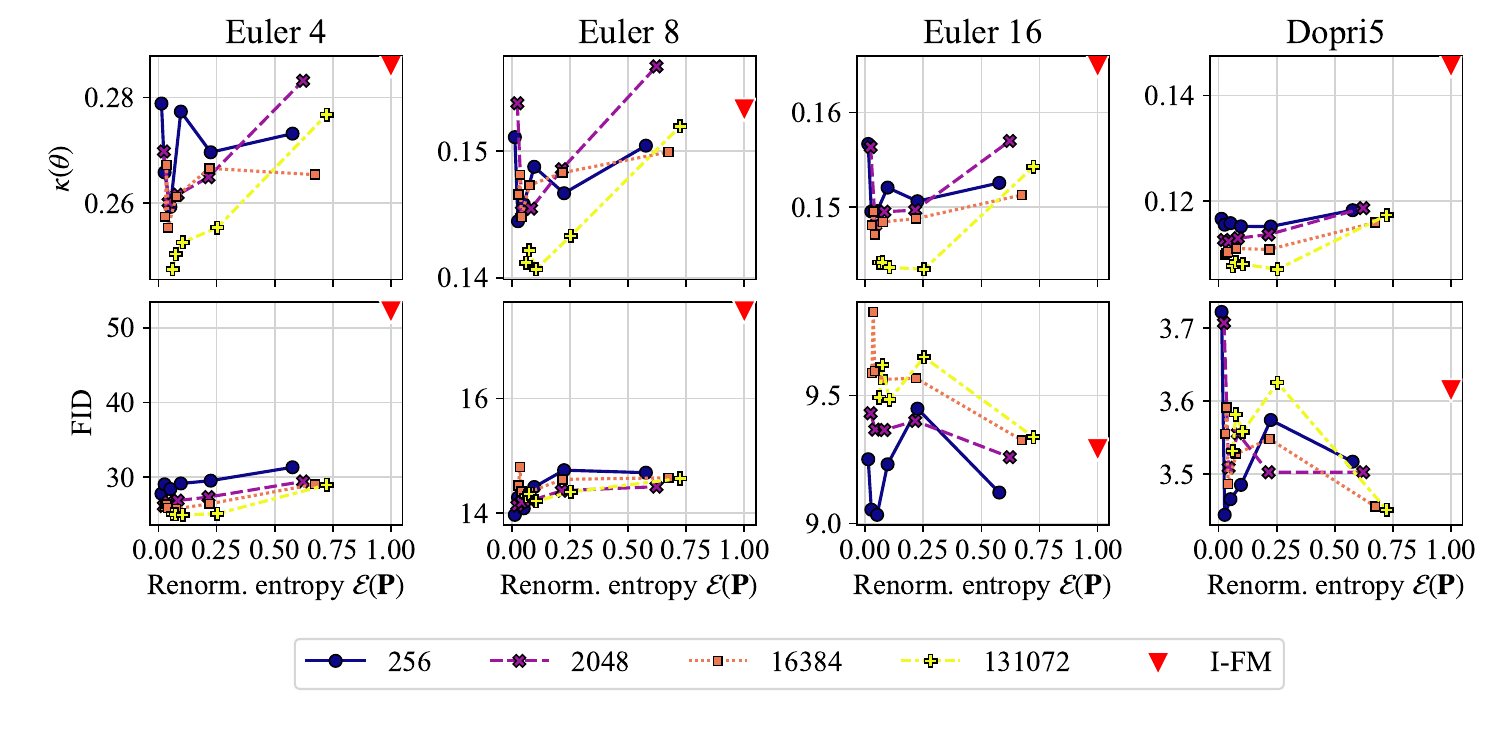} 
\vskip-.7cm
\caption{Experiment metrics for \textbf{CIFAR-10} image generation. We evaluate the trained models using the \texttt{Euler} solver with three different number of steps, and with the \texttt{Dopri5} solver and adaptive steps. The plots demonstrate the benefits of a larger OT batch size to achieve significantly smaller curvature, and moderately smaller FID at low number of integration steps. CIFAR-10 is not necessarily the best setup to evaluate the performance of OT based FM, since the number of points is relatively low (the batch sizes we consider involve in fact resampling \textit{data}). Our experiments also suggest that in this setting, lower renormalized entropy generally benefits the performance.}\label{fig:cifar10}
\end{figure}

\begin{figure}[h]
\vskip-.3cm
\includegraphics[width=\linewidth]{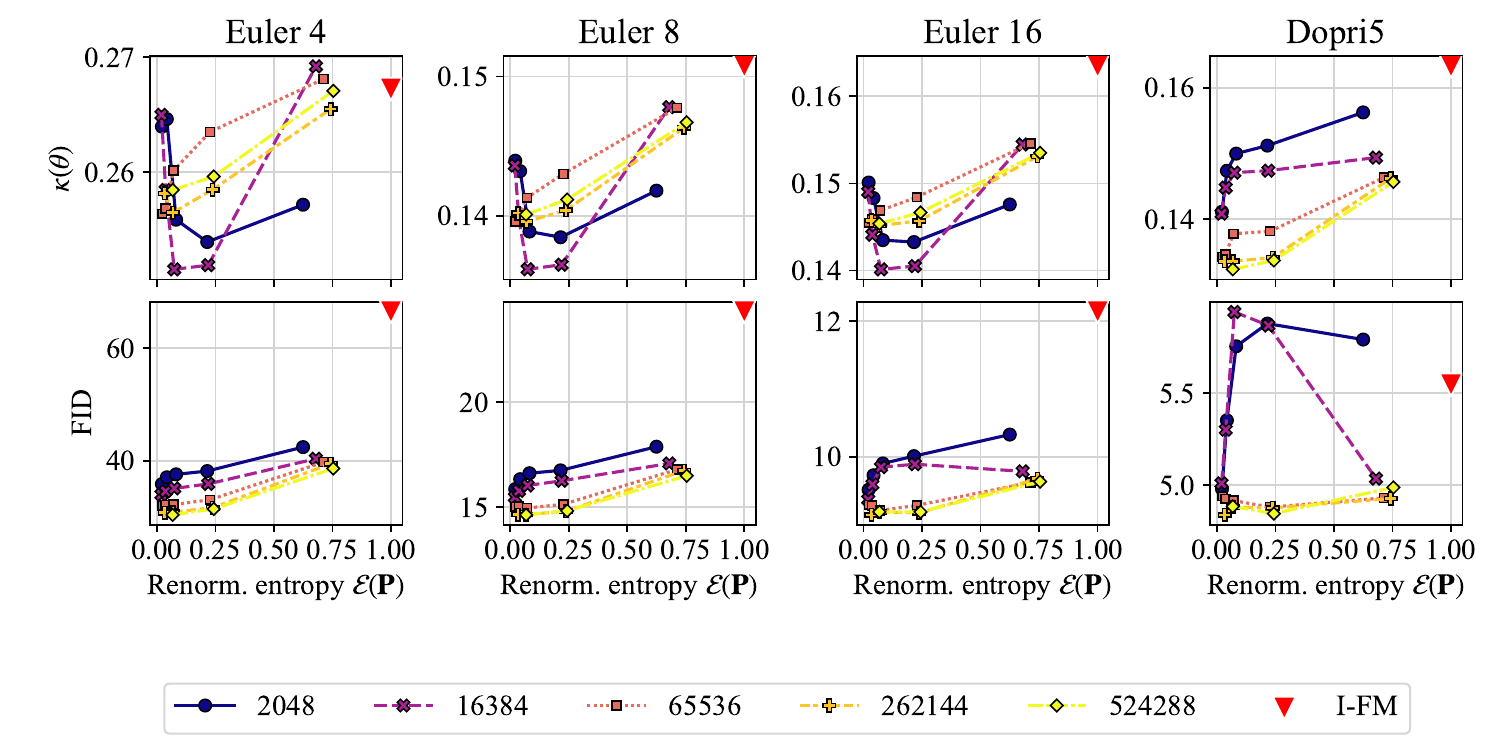} 
\vskip-.2cm
\caption{\textbf{ImageNet-32} experiment metrics. We observe that both FID and curvature are smaller when using larger OT batch size, and smaller renormalized entropy tends to result in better metrics.}\label{fig:imagenet32_results}
\end{figure}

\begin{figure}[h]
\includegraphics[width=\linewidth]{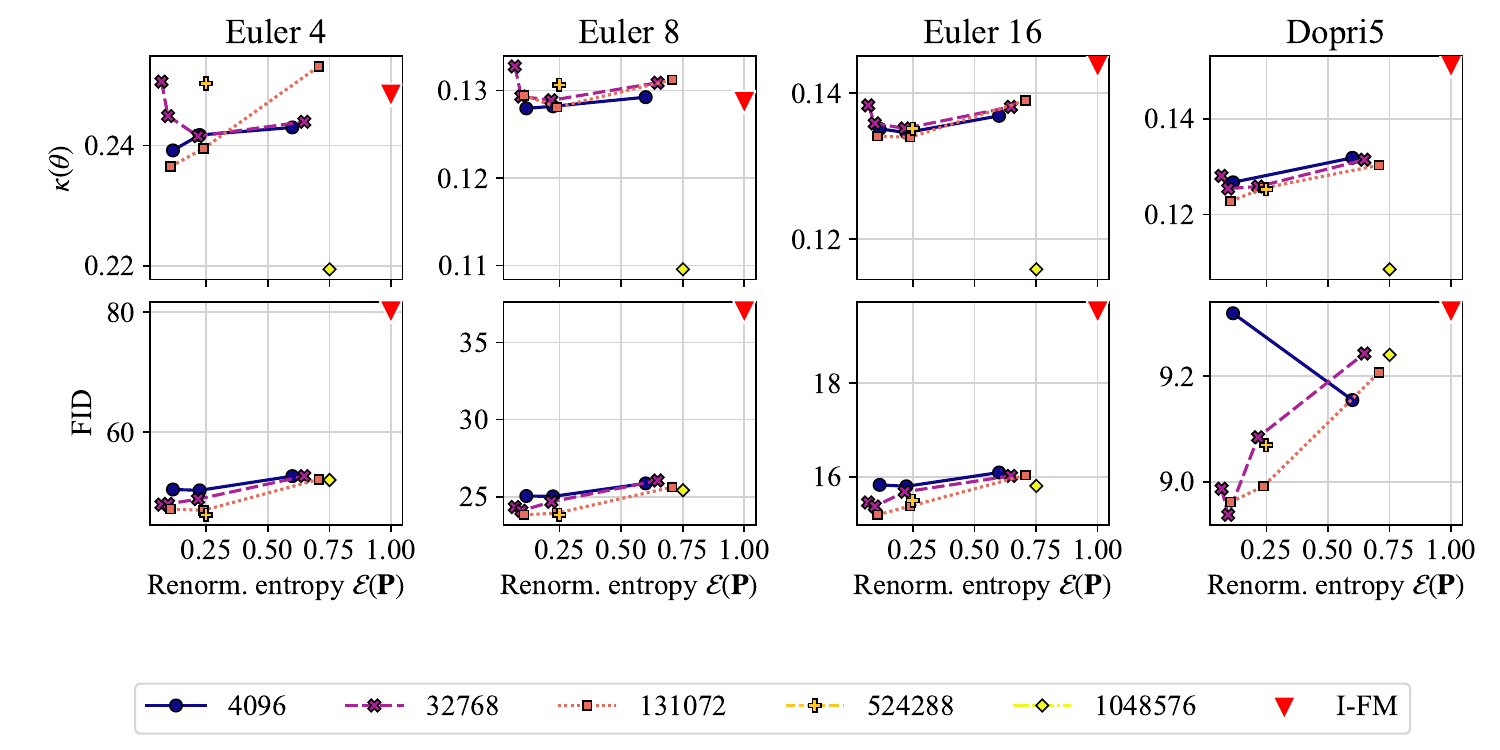} 
\caption{\textbf{ImageNet-64} results: Curvature and FID obtained with \texttt{Euler} integration with varying number of steps, as well as \texttt{Dopri5} integration.}
\label{fig:imagenet64_results}
\end{figure}

\subsection*{Conclusion} Our experiments suggest that guiding flow models with large scale \citeauthor{Sinkhorn64} couplings can prove beneficial for downstream performance. We have tested this hypothesis by computing and sampling from both crisp and blurry $n\times n$ \citeauthor{Sinkhorn64} coupling matrices for sizes $n$ in the millions of points, placing them on an intuitive scale from 0 (close to using an optimal permutation as returned e.g. by the Hungarian algorithm) to 1 (equivalent to the independent sampling approach popularized by~\citet{lipman2023flow}). This involved efficient multi-GPU parallelization, realizing scales which, to our knowledge, were never achieved previously in the literature. Although the scale of these computations may seem large, they are still relatively cheap compared to the price one has to pay to optimize the FM loss, and, additionally, are completely independent from model training. As a result, they should be carried out prior to any training. While we have not explored the possibility of launching multiple jobs with them (to ablate, e.g., for other fundamental aspects of model training such as learning rates), we leave a more careful tuning of these training runs for future work. We claim that paying this relatively small price to log and sample paired indices obtained from large scale couplings results for mid-sized problems in great returns in the form of faster training and faster inference, thanks to the straightness of the flows learned with the batch-OT procedure. For larger sized problems, the conclusion is not so clear, although we quickly observe benefits when using middle values for $n$ (in the thousands) that typically improve when going beyond hundreds of thousands when relevant, and renormalized entropies of around $0.1$. This forms our main practical recommendation for end users.

\textbf{Limitations.} Our results rely on training of neural networks. In the interest of comparison, we have used the same model across all changes advocated in the paper (on $n$ and $\varepsilon$). However, and due to the scale of our experiments, we have not been able to ablate important parameters such as learning rates when varying $n$ and $\varepsilon$, and instead relied on those previously proposed for I-FM.

\bibliography{biblio}
\bibliographystyle{plainnat}

\newpage
\appendix

\section{Appendix}

\subsection{The Necessity of Large OT Batch Size}\label{app:proof}
Here, we formalize the assumptions in and provide the proof of \Cref{prop:var_lower_bound_informal}.

Assuming that $\mu_0$ admits a density, if we were to couple $X_0$ and $X_1$ through optimal transport, by \Cref{thm:brenier_thm} we would have $\Var(X_1 \,|\, X_0) = 0$ a.s.\ over $X_0$, where variance is the sum of coordinate variances. In general, any coupling that provides $\Var(X_1 \,|\, X_0) = 0$ allows for one-step generation, simply by performing least-squares regression to learn $\E[X_1 \,|\, X_0]$.
Therefore, we adopt $\Var(X_1 \,|\, X_0)$ as a measure of success of a coupling.

Recall from \ref{alg:train} that to obtain a pair of samples $X_0,X_1$ for training, we first draw n i.i.d.\ samples $\bX_0 \sim \mu_0^{\otimes n}$ and $\bX_1 \sim \mu_1^{\otimes n}$. Then, we sample $X_0,X_1 \sim \hat{\pi}_n(\bX_0,\bX_1)$, where $\hat{\pi}_n$ denotes the discrete optimal (entropic) transport solution between the uniform distribution on $\bX_0$ and $\bX_1$. We only require $\hat{\pi}_n(\bX_0,\bX_1)$ to be supported on $\bX_0 \times \bX_1$, as formalized by the following assumption.

\begin{assumption}\label{assump:support}
    $\hat{\pi}_n(\bX_0,\bX_1)$ is supported on $\bX_0$ and $\bX_1$, more precisely,
    \[
    \hat{\pi}_n(\bX_0,\bX_1) = \sum_{i,j}^{n}P_{ij}(\bX_0,\bX_1)\delta_{(\bx_0^{(i)},\bx_1^{(j)})},
    \]
    where $(P_{ij}(\bX_0,\bX_1))_{ij}$ is some bistochastic matrix, equivariant under permutations of $\bX_0$ and $\bX_1$, and $\delta$ denotes the Dirac measure.
\end{assumption}

To capture the intrinsic dimension of data, we can impose the following assumption on $\mu_1$.
\begin{assumption}\label{assump:data}
    For $X$ and $X'$ drawn independently from the data distribution $\mu_1$, we have
    $$\Prob[\Vert X - X' \Vert \leq t] \leq Ct^r,$$
    for all $t > 0$ and some $C,r > 0$.
\end{assumption}
Note that the volume of an $r$-dimensional ball of radius $t$ is proportional to $t^r$. Therefore, $r$ in the above assumption roughly captures the intrinsic dimension of data, typically assumed to be much less than the ambient dimension, i.e. $r \ll d$.

We are now ready to present the proof of \Cref{prop:var_lower_bound_informal}, which we repeat here for ease of reference.

\begin{proposition}\label{prop:var_lower_bound}
    Suppose $\hat{\pi}_n$ is any coupling rule that satisfies Assumption \ref{assump:support}, and that $\mu_1$ satisfies Assumption \ref{assump:data}. Define the coupling $X_0,X_1 \sim \pi_n$ as follows: first draw $\bX_0 \sim \mu_0^{\otimes n}$ and $\bX_1 \sim \mu_1^{\otimes n}$, then sample $X_0,X_1 \sim \hat{\pi}_n(\bX_0,\bX_1)$. Then, for any $\bx_0 \in \sR^d$, we have
    \[
    \Var_{X_0,X_1 \sim \pi_n}(X_1 \,|\, X_0 = \bx_0) \geq cn^{-2/r},
    \]
    where $c > 0$ is a constant depending only on $C$ and $r$.
\end{proposition}

To prove \Cref{prop:var_lower_bound}, we use the fact that
\[
    \Var(X_1 \,|\, X_0 = \bx_0) = \frac{1}{2}\E[\Vert X_1 - X'_1\Vert^2 \,|\, X_0 = \bx_0],
\]
where $X_1$ and $X'_1$ are drawn independently from $\pi_n(\cdot \,|\, X_0 = \bx_0)$. However, $X_1$ and $X'_1$ essentially come from different batches $\bX_1$ and $\bX'_1$, and their only dependence is through being coupled with $X_0 = \bx_0$. We can remove this dependence by lower bounding the variance by the minimum distance between two batches of i.i.d.\ samples $\bX_1$ and $\bX'_1$. This is performed by the following lemma.

\begin{lemma}
    Let $\pi_n$ be as defined in \Cref{prop:var_lower_bound}. Then, for any $\bx_0 \in \sR^d$, we have
    \[
    \Var_{X_0,X_1 \sim \pi_n}(X_1 \,|\, X_0 = \bx_0) \geq \frac{1}{2}\E_{\bX_1,\bX'_1 \sim \mu_1^{\otimes n} \otimes \mu_1^{\otimes n}}[D(\bX_1,\bX'_1)],
    \]
    where $D(\bX_1,\bX'_1) \coloneqq \min_{\bx_1 \in \bX_1,\bx'_1 \in \bX'_1}\Vert \bx_1 - \bx'_1\Vert^2$.
\end{lemma}
\begin{proof}
    To draw $X_1,X'_1 \sim \pi_n(\cdot \,|\, X_0 = \bx_0) \otimes \pi_n(\cdot \,|\, X_0 = \bx_0)$ we can draw 
    $\bX_0,\bX'_0 \sim \mu_0^{\otimes n}\otimes \mu_0^{\otimes n}$ and $\bX_1,\bX'_1 \sim \mu_1^{\otimes n} \otimes \mu_1^{\otimes n}$. We then replace the first sample in $\bX_0$ and $\bX'_0$ with $\bx_0$ to condition on $X_0 = \bx_0$ (in particular, we rely on the equivariance of $\hat{\pi}_n$), and denote them by $\bX_0(\bx_0)$ and $\bX'_0(\bx_0)$. We can then write
    \begin{align*}
        \Var(X_1 \,|\, X_0 = \bx_0) &= \frac{1}{2}\E[\Vert X_1 - X'_1 \Vert^2 \,|\, X_0 = \bx_0]\\
        &= \frac{1}{2}\E\big[\E[\Vert X_1 - X'_1 \Vert^2 \,|\, \bX_0(X_0), \bX'_0(X_0), \bX_1, \bX'_1] \,|\, X_0 = \bx_0\big] \\
        &= \frac{1}{2}\E\left[\sum_{\bx_1 \in \bX_1,\bx'_1 \in \bX'_1}\hat{\pi}_n(\bX_0(\bx_0),\bX_1)(\bx_1 \,|\, \bx_0)\hat{\pi}_n(\bX'_0(\bx_0),\bX'_1)(\bx'_1  \,|\, \bx_0)\Vert \bx_1 - \bx'_1\Vert^2\right]\\
        &\geq \frac{1}{2}\E\left[D(\bX_1,\bX'_1)\sum_{\bx_1\in\bX_1,\bx'_1 \in \bX'_1}\hat{\pi}_n(\bX_0(\bx_0),\bX_1)(\bx_1\,|\,\bx_0)\hat{\pi}_n(\bX'_0(\bx_0),\bX'_1)(\bx'_1\,|\,\bx_0)\right]\\
        &\geq \frac{1}{2}\E[D(\bX_1,\bX'_1)],
    \end{align*}
    which finishes the proof.
\end{proof}

Using the above lemma, to prove \Cref{prop:var_lower_bound}, we only need to estimate the expected distance between two batches of samples from $\mu_1$.
\begin{proof}[Proof of \Cref{prop:var_lower_bound}]
    Let $\bX_1,\bX'_1$ be independent batches of $n$ i.i.d.\ samples from $\mu_1$. We use expand our notation by letting $D(\bX_1,\bx'_1) \coloneqq \min_{\bx_1 \in \bX_1}\Vert \bx_1 - \bx'_1\Vert$ be the distance between a single sample and a batch. By the Markov inequality, for any $t > 0$ we have
    \begin{align*}
        \E[D(\bX_1,\bX'_1)] &\geq t \Prob[D(\bX_1,\bX'_1) \geq t]\\
        &= t\E\Big[\Prob\Big[\bigcap_{X'_1 \in \bX'_1}\{D(\bX_1,X'_1) \geq t\} \,|\, \bX_1\Big]\Big]\\
        &= t \E\Big[\Prob[D(\bX_1, X'_1) \geq t \,|\, \bX_1]^n\Big] \tag{Independence}\\
        &\geq t\Prob[D(\bX_1, X'_1) \geq t]^n \tag{Jensen's Inequality}\\
        &= t\E\Big[\Prob\Big[D(\bX_1,X'_1) \geq t \,|\, X'_1\Big]\Big]^n\\
        &= t\E\Big[\Prob\Big[\bigcap_{X_1 \in \bX_1}\{\Vert X_1 - X'_1 \Vert \geq t\} \,|\, X'_1\Big]\Big]^n\\
        &= t\E\Big[\Prob[\Vert X_1 - X'_1\Vert \geq t \,|\, X'_1]^n]^n\\
        &\geq t\Prob[\Vert X_1 - X'_1 \Vert \geq t]^{n^2} \tag{Jensen's Inequality}\\
        &\geq t(1 - Ct^r)^{n^2}\tag{Assumption \ref{assump:data}}.
    \end{align*}
    Choosing $t = (2Cn^2)^{-1/r}$ and using the inequality $(1-1/(2x))^x \geq 1/2$ for all $x \geq 1$ yields $\E[D(\bX_1,\bX'_1)] \geq (2Cn^2)^{-1/r}/2$, which completes the proof.
\end{proof}

\subsection{Using the negative dot-product cost rather than squared-Euclidean in Sinkhorn}\label{app:bettersinkhorn}
As we mention in the main text, entropically regularized optimal transport plan for the squared Euclidean cost can be equivalently recast using exclusively the negative scalar product $(\bm{x}, \bm{y}) \mapsto -\langle \bm{x}, \bm{y} \rangle$ between source and target, and not on any absolute measure of scale. To see this, consider an affine map $\overline{\bm{x}} = \alpha \bm{x} + \bm{\beta}$ with $\alpha > 0$. Then:
\[
    \langle \overline{\bm{x}}, \bm{y} \rangle = \alpha \langle \bm{x}, \bm{y} \rangle + \langle \bm{\beta}, \bm{y} \rangle.
\]
The second term is a rank-1 term that will be absorbed by the optimal dual potentials (see \eqref{eq:entdual}) and the factor $\alpha$ amounts to a rescaling of the entropic regularization level $\varepsilon$. In particular, when there is only translation, then $\alpha = 1$ and the transport plans are identical for the same $\varepsilon$.

Therefore for input data $\bm{X} \in \mathbb{R}^{n \times d}, \bm{Y} \in \mathbb{R}^{m \times d}$ the Sinkhorn transport plan depends only on the dot-product cost $-\bm{X} \bm{Y}^T$. We argue that it is always more natural to use the dot-product cost than the full squared-Euclidean cost, and we find that in practice using directly the dot-product can improve the numerical conditioning of the Sinkhorn algorithm. This is because we drop terms arising from the squared norm, which can be very large. 
This becomes especially important for single-precision floating point computations, as is the case for the large scale GPU applications we consider. 

We illustrate this in Figure \ref{fig:sinkhorn_iters}: we sample $N =  8192$ points $\{ \bm{x}_i \}_{i = 1}^N$ in dimension $d = 128$ from the Gaussian example described in Section \ref{subsec:exp_synth1} and map them through the piecewise affine Brenier map, i.e. $\bm{y}_i = T(\bm{x}_i)$. We then introduce a translation, $\overline{\bm{y}}_i = \bm{y} + 5$. We use the Sinkhorn algorithm (Algorithm \ref{algo:sinkhorn}) with either the dot-product or squared Euclidean cost to compute the transport plan and record the number of iterations taken, and our computations are carried out on GPU with single-precision arithmetic. Even though we already use log-domain computation tricks to prevent under/overflow, we find that for small $\varepsilon$, Sinkhorn with squared Euclidean cost begins to suffer from numerical issues and fails to converge within the iteration limit of 50,000. 

\begin{figure}[h]
    \centering
    \includegraphics[width=0.5\linewidth]{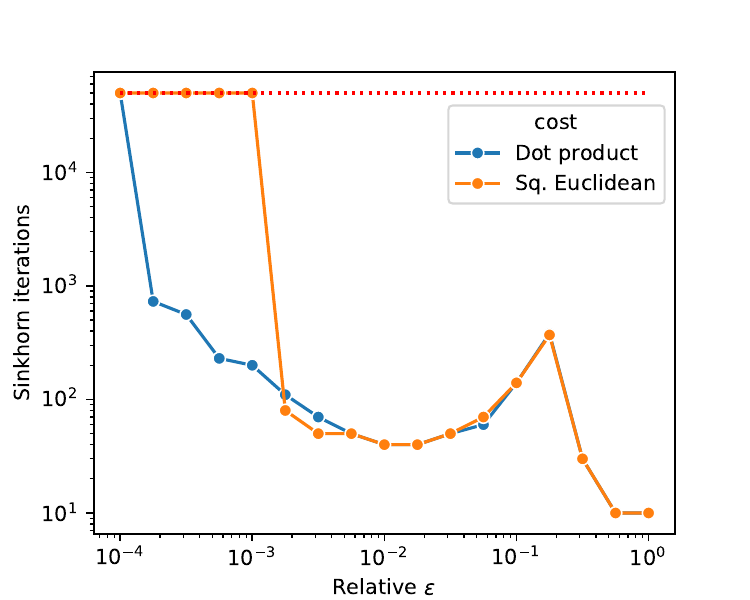}
    \caption{Number of Sinkhorn iterations against relative $\varepsilon$ for Gaussian piecewise affine OT example.}
    \label{fig:sinkhorn_iters}
\end{figure}

\subsection{Sinkhorn Speedup from Warmstart}\label{app:warmstart}

\Cref{tab:warmstart} shows the average time in seconds spent solving \eqref{eq:entdual} using Sinkhorn iterations, for the values of regularization level $\varepsilon$ and batch size $n$ we consider. We find that for almost all choices of $(n, \varepsilon)$, warmstarting yields significant speedups compared to the default Sinkhorn initialization. We therefore enable warmstart by default in our experiments. 

\begin{table}
\small
\centering 
\begin{tabular}{llllllllll}
\toprule 
 & \multicolumn{4}{c}{\textbf{Sinkhorn time, warmstart (s)}} & \multicolumn{4}{c}{\textbf{Sinkhorn time, no warmstart (s)}} \\ 
 \cmidrule{2-5} \cmidrule{6-10}
Batch size & 16384 & 65536 & 262144 & 524288 & 16384 & 65536 & 262144 & 524288 \\
$\varepsilon$ &  &  &  &  &  &  &  &  &  \\
\midrule
0.003  & 3.71 & 133.54 & 1271.23 & 4916.29 & 5.97 & 223.93 & 2300.03 & 9207.89 \\
0.01 & 1.40 & 48.68 & 466.47 & 1791.55     & 2.02 & 73.64 & 710.40 & 2893.43 \\
0.03 & 0.49 & 16.09 & 153.78 & 600.50      & 0.65 & 22.01 & 218.63 & 836.82 \\
0.1 & 0.14 & 3.16 & 31.75 & 126.10         & 0.18 & 5.80 & 61.25 & 229.85 \\
0.3 & 0.06 & 1.72 & 17.60 & 67.50          & 0.09 & 2.37 & 18.32 & 66.73 \\
\bottomrule
\end{tabular}
\caption{Average per-batch Sinkhorn time in seconds, with and without warmstarting for 32x32 ImageNet OTFM training.}
\label{tab:warmstart}
\end{table}

\subsection{Sinkhorn Speedup from PCA}\label{app:pca}
\Cref{tab:pca} presents the average wall-clock time of running Sinkhorn on ImageNet-64 with a batch size of $131072$ and $\varepsilon=0.1$. As can be seen, we can reduce dimension by a factor of almost 25, which reduces time by a factor of 10, while having no significant impact on the quality of generated images measured by FID. Moreover, the normalized entropy demonstrates that the coupling obtained from the reduced-dimensional cost matrix has the same sharpness as the original coupling, which is expected since PCA will mostly preserve the dot product cost, resulting in similar couplings.

\begin{table}[h]
    \renewcommand{\arraystretch}{1.3}
    \centering
    \begin{tabular}{ccccc}
        \toprule
         & $k=500$ & $k=1000$ & $k=3000$ & $k=12288$ (full dimension)\\
         \hline
        \textit{Sinkhorn time} & \textit{1.45s} & \textit{1.82s} & \textit{4.05s} & \textit{14.1s} \\
        \hline
        \hline
        FID@NFE=4 & $48.4$ & $48.1$ & $47.0$ & $47.3$\\
        \hline
        FID@NFE=8 & $24.7$ & $24.4$ & $24.0$ & $24.2$\\
        \hline
        FID@NFE=16 & $16.0$ & $15.8$ & $15.8$ & $15.8$\\
        \hline
        FID@Dopri5 (Adaptive) & $9.17$ & $9.33$ & $9.46$ & $9.51$\\
        \hline
        \hline
        Renormalized Entropy & $0.247$ & $0.239$ & $0.232$ & $0.236$\\
        \bottomrule
    \end{tabular}
    \vspace{3mm}
    \caption{Sinkhorn runtime per batch and FID for different solvers and different PCA dimension $k$. The model is trained on ImageNet-64 with OT batch size = $131072$ and $\varepsilon=0.1$. Note that the difference in FID for full $k$ compared to \Cref{tab:fid_main} is due to using a different random seed for training.}
    \label{tab:pca}
\end{table}

\subsection{Gaussian Transported with a Piecewise Affine Ground-Truth OT Map}\label{app:example_gaussian}
We present in Figure~\ref{fig:gaussian-example} examples of our piecewise affine OT map generation, corresponding to results presented more widely in Figures
~\ref{fig:piecewise} and ~\ref{fig:piecewise-final-app-entropy}.
\begin{figure}
\includegraphics[width=.49\textwidth]{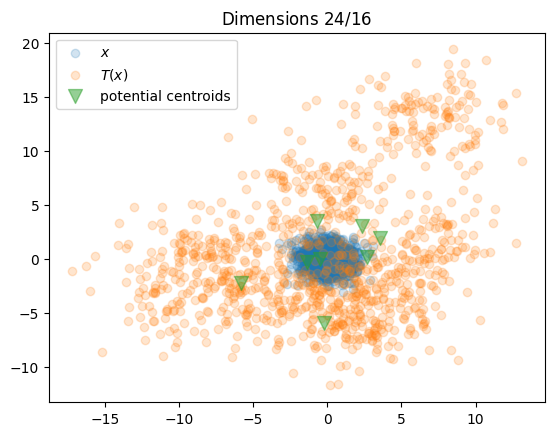}
\includegraphics[width=.49\textwidth]{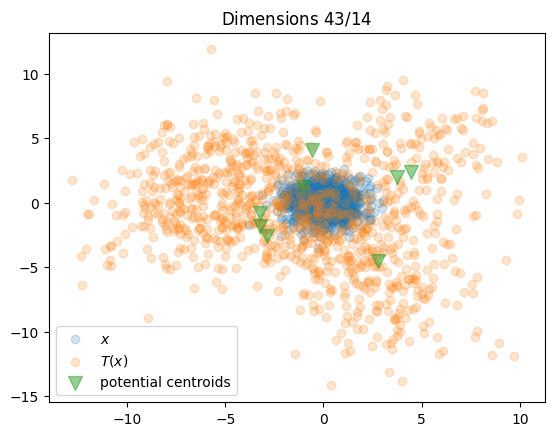}\\
\includegraphics[width=.49\textwidth]{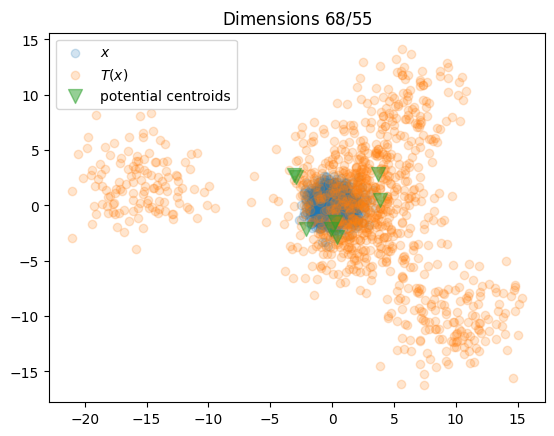}
\includegraphics[width=.49\textwidth]{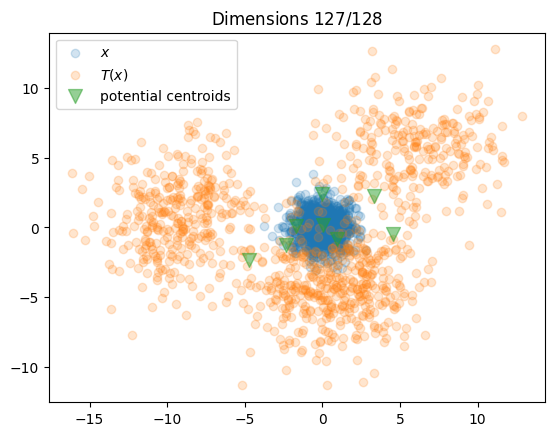}
\caption{Example of the maps generated in our piecewise affine benchmark task. In these plots $d=128$ and there are therefore $128/16=8$ quadratic potentials sampled around $0$. These 2D plots illustrate the action of the same $128$ dimensional map, pictured using 2D projections overs pairs chosen in $[1,\dots,128]$.}\label{fig:gaussian-example}
\end{figure}

\begin{figure}
\includegraphics[width=\linewidth]{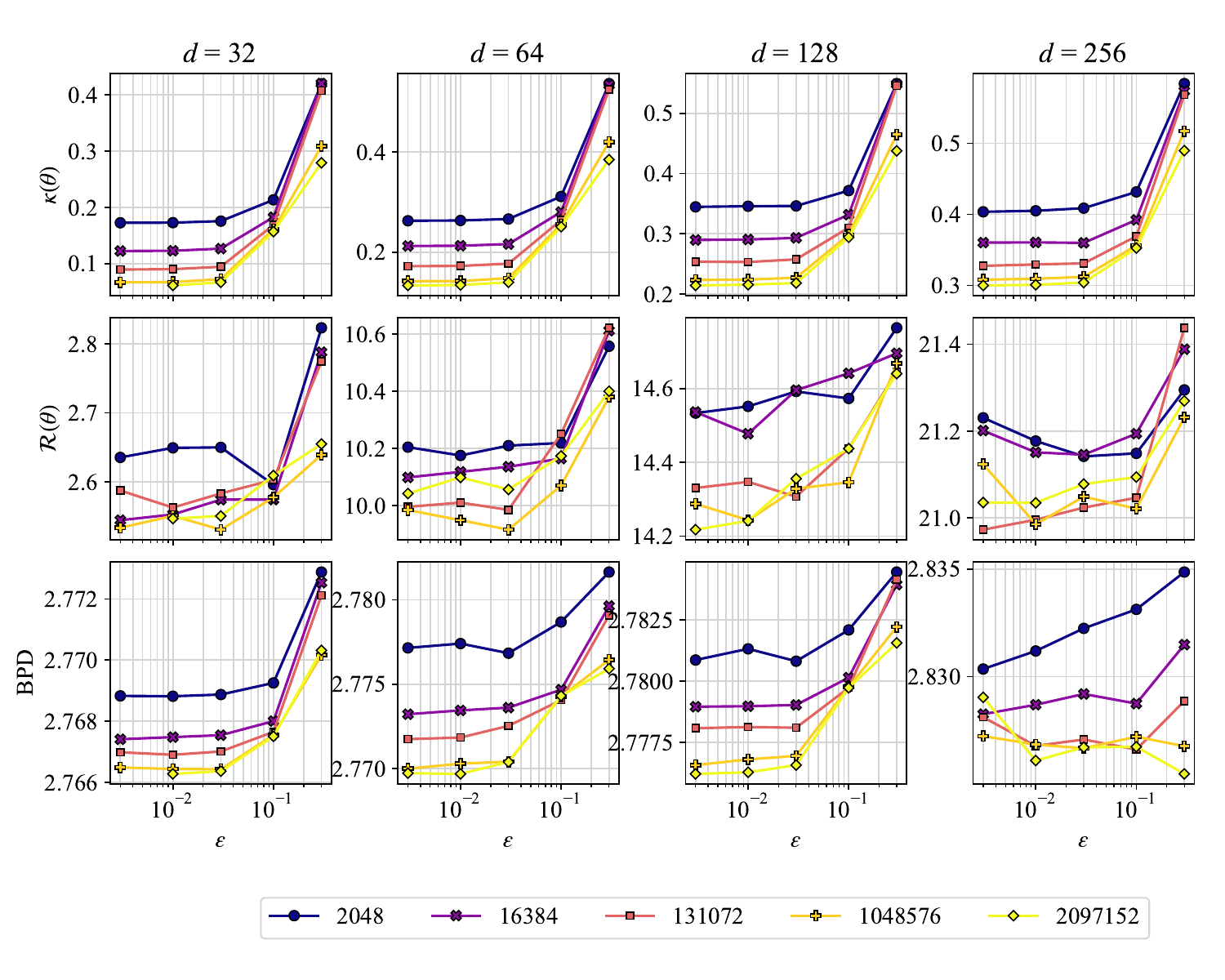} 
\includegraphics[width=\linewidth]{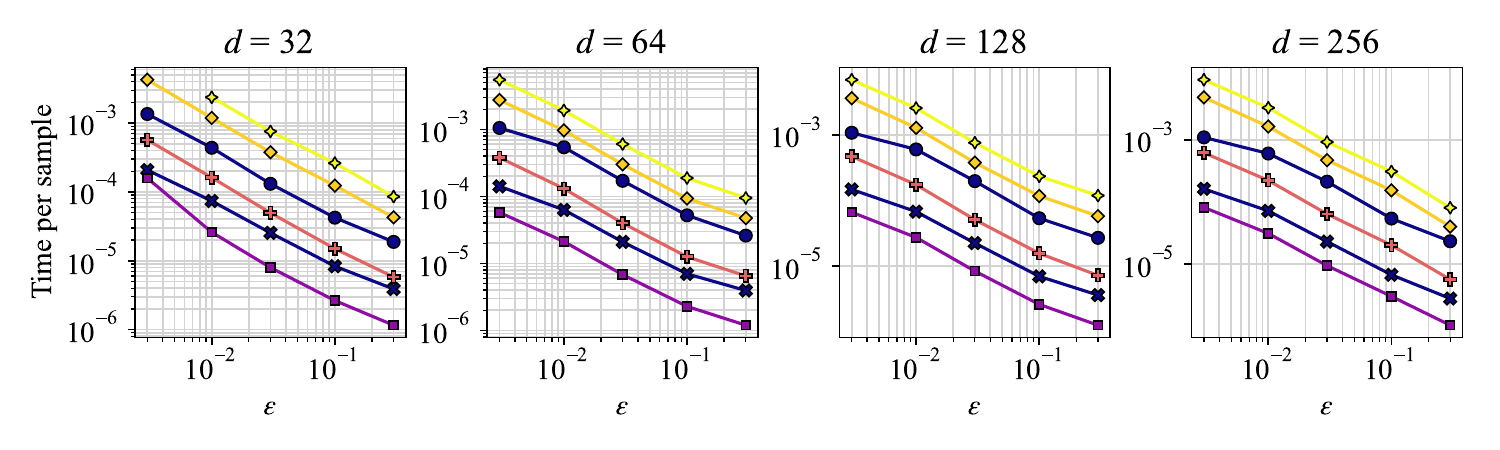} 
\caption{Plots corresponding to Figure~\ref{fig:piecewise} in main paper, on piecewise affine synthetic benchmark, using directly the relative \texttt{epsilon} parameter as the x-axis (log-scale), instead of re-normalized entropy.}\label{fig:piecewise-final-app-entropy}
\end{figure}

\subsection{\citeauthor{korotin2021neural} Benchmark Examples}\label{app:example_korotin}
The reader may find examples of the \citeauthor{korotin2021neural} benchmark in their paper, App. A.1, Figure 6.

\begin{figure}
\includegraphics[width=\linewidth]{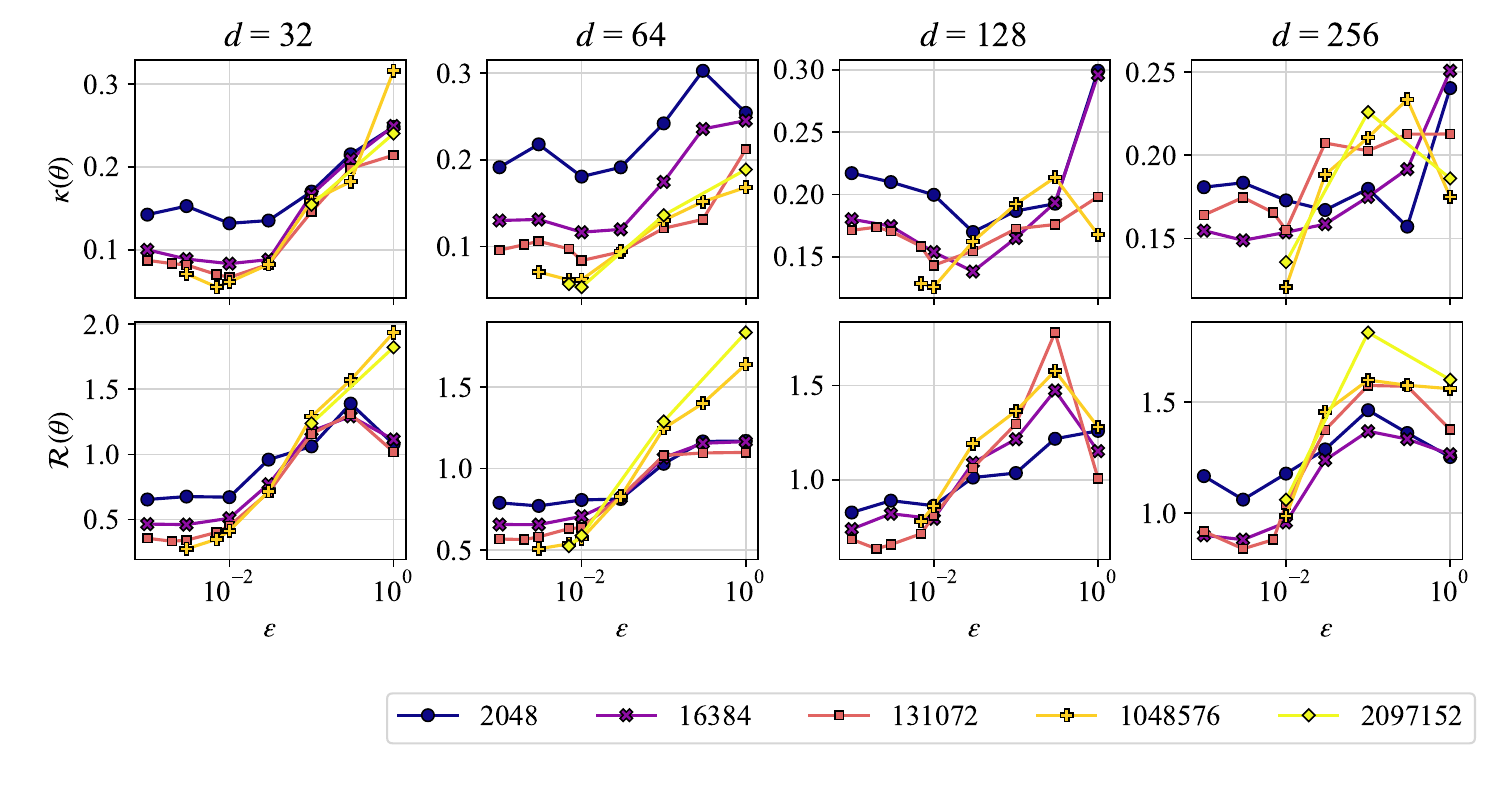} 
\includegraphics[width=\linewidth]{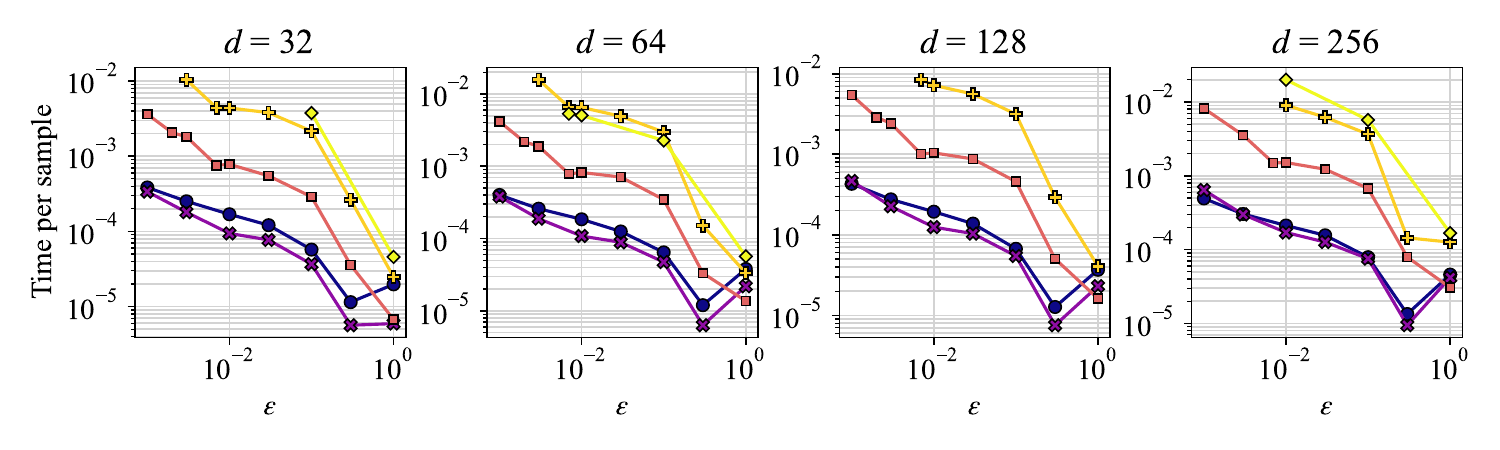} 
\caption{Plots for the \citeauthor{korotin2021neural} benchmark, shown initially in Figure~\ref{fig:korotin}, using the relative \texttt{epsilon} $\varepsilon$ parameter directly in the x-axis, in logarithmic scale.}\label{fig:korotin-final-app-epsilon}
\end{figure}

\subsection{CIFAR-10 Detailed Results}\label{app:cifar10}
We show generated images in Figure~\ref{fig:cifar10_grid}. We see general quantitative and qualitative improvements for larger OT batch size and smaller renormalized entropy. However, these improvements are not as significant as our observation for the more complex down-sampled ImageNet datasets in Appendices~\ref{app:imagenet32} and~\ref{app:imagenet64}, likely due to the fact that the dataset size is much smaller. We also plot BPD as a function of renomralized entropy for CIFAR-10, ImageNet-32, and ImageNet-64, in \Cref{fig:bpd}.

\begin{figure}[ht]
\centering
\includegraphics[width=.32\textwidth]{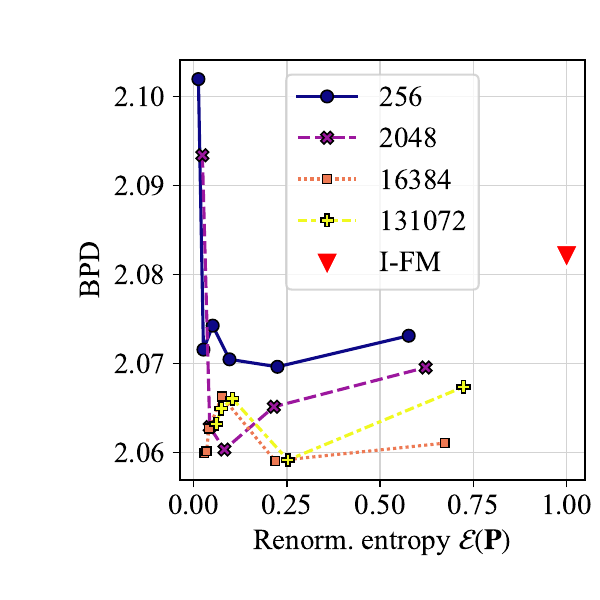}
\includegraphics[width=.32\textwidth]{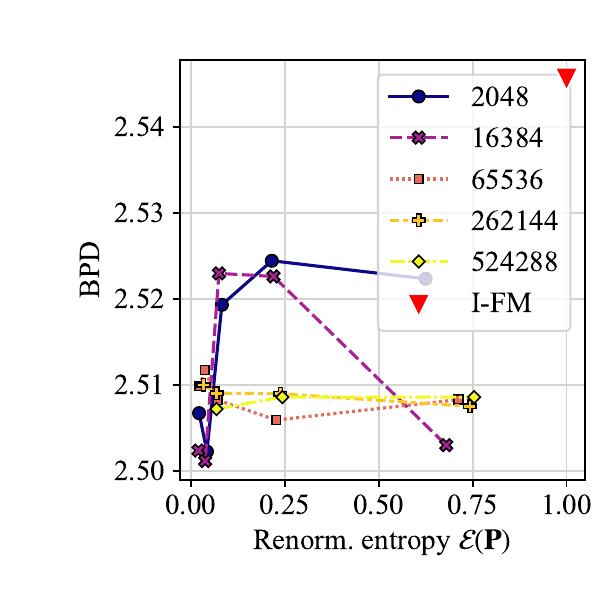}
\includegraphics[width=.32\textwidth]{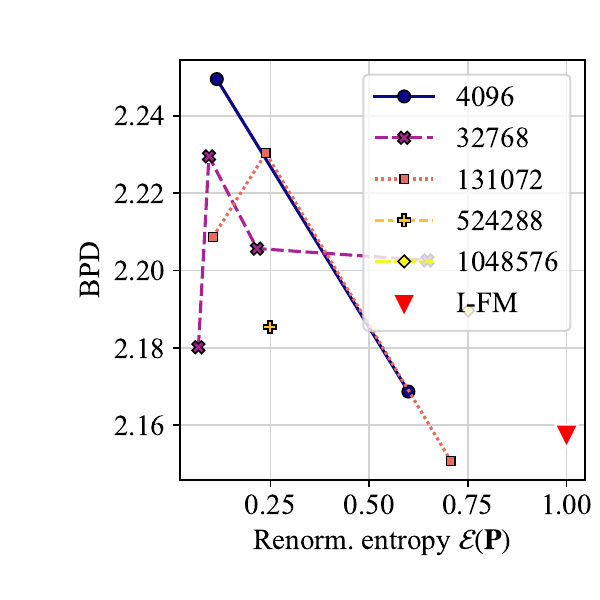}
\caption{BPD for \textbf{CIFAR-10} (Left),  \textbf{ImageNet-32} (Middle) and \textbf{ImageNet-64} (Right). The BPDs are computed using \texttt{Dopri5} integration, evaluated on 50 times steps, and computed using $8$ vectors for the Hutchinson trace estimator. As a consequence of its high number of function evaluations, the \texttt{Dopri5} solver relies less on straightness of the flows. Therefore, we do not observe a significant difference across batch sizes.}
    \label{fig:bpd}
\end{figure}

\begin{figure}[ht]
\centering
\input{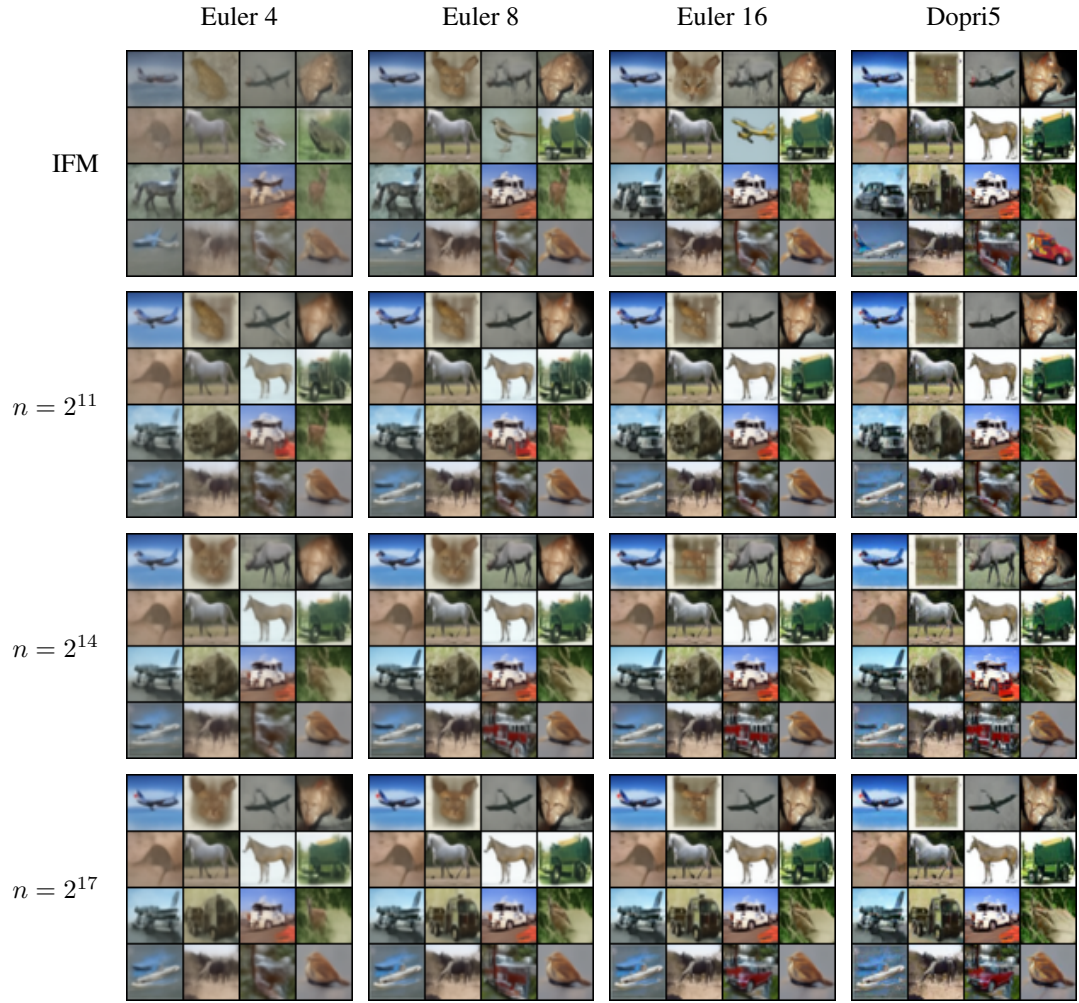}
\caption{Non-curated images generated from models trained on \textbf{CIFAR-10}. The number following \texttt{Euler} denotes NFE, while \texttt{Dopri5} uses an adaptive number of evaluations. $n$ denotes the total batch size for the Sinkhorn algorithm. We use OT-FM models trained with $\varepsilon=0.01$.}
\label{fig:cifar10_grid}
\end{figure}

\subsection{ImageNet-32 Detailed Results}\label{app:imagenet32}
Figure~\ref{fig:imagenet32_grid} shows generated images using I-FM and OT-FM with different batch sizes and different ODE solvers. As expected, the greatest improvements in the quality of images occur with smaller number of integration steps, which demonstrates the benefit of OT-FM for reducing inference cost.

\begin{figure}[ht]
\centering
\input{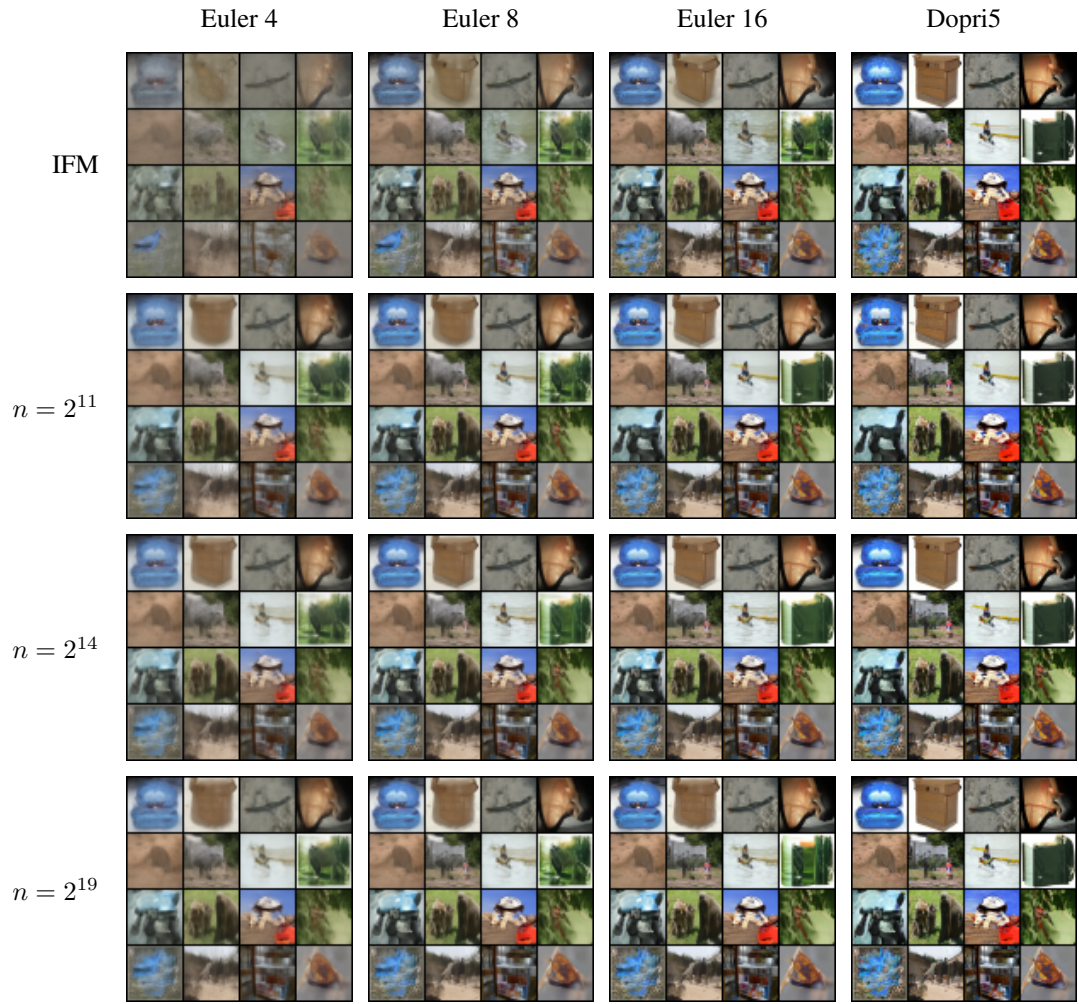}
\caption{Non-curated images generated from models trained on \textbf{ImageNet-32}. $n$ denotes the total batch size for the Sinkhorn algorithm. We use OT-FM models trained with $\varepsilon=0.1$.}
\label{fig:imagenet32_grid}
\end{figure}

\subsection{ImageNet-64 Detailed Results}\label{app:imagenet64}
We also perform experiments on the 64 $\times$ 64 downsampled ImageNet dataset, where we observe an even bigger gap between I-FM and OT-FM with large batch size both in terms of metrics (Figure~\ref{fig:imagenet64_results}) and in terms of qualitative results (Figure~\ref{fig:imagenet64_grid}). This observation implies that with a proper choice of entropy and batch size, OT-FM is a promising approach to reduce inference cost and generate higher quality high-resolution images.

\begin{figure}[ht]
\centering
\hspace*{-1cm}
\input{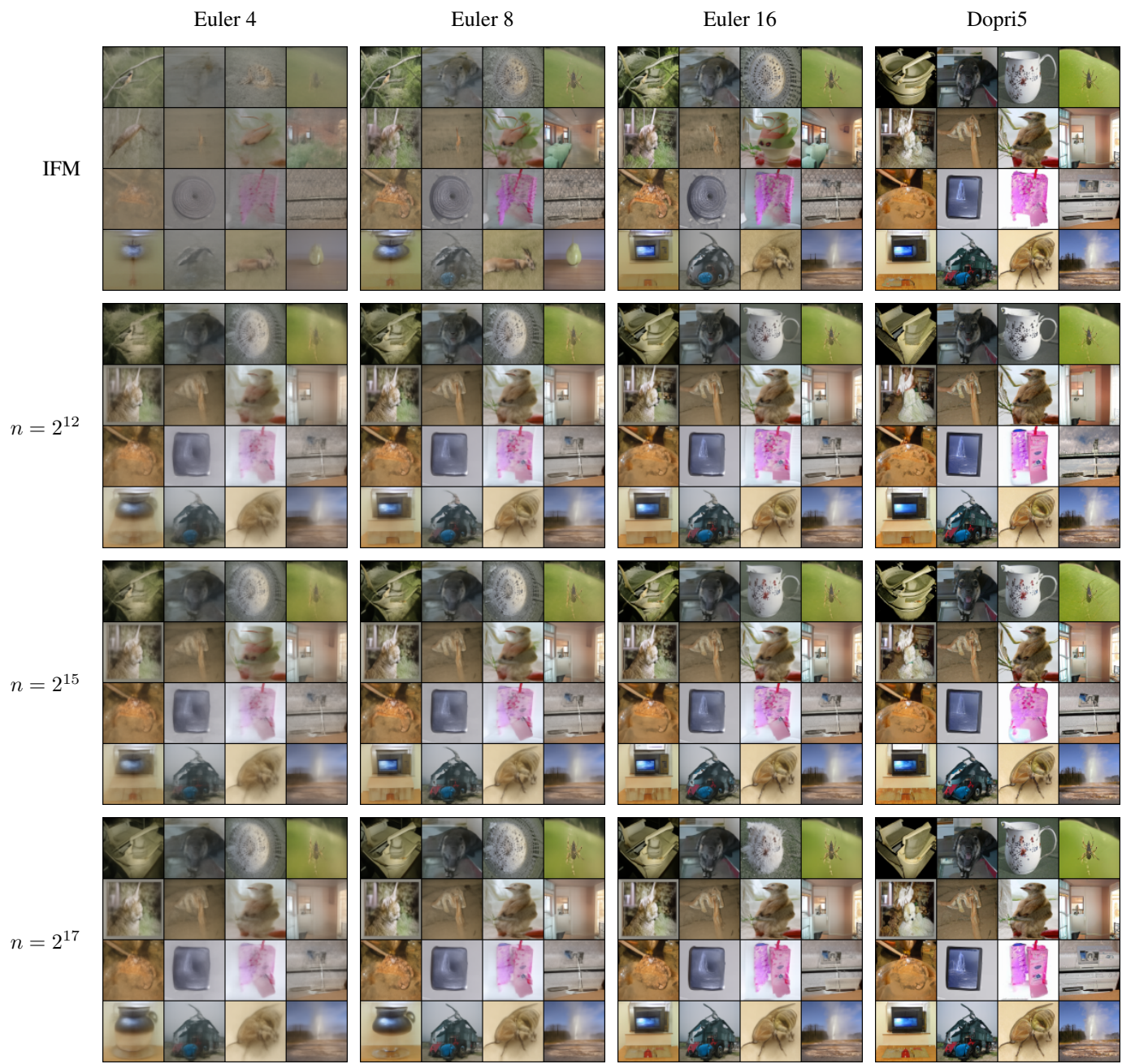}
\caption{Non-curated images generated from models trained on \textbf{ImageNet-64}. $n$ denotes the total batch size for the Sinkhorn algorithm. We use models trained with a varying trained with $\varepsilon=0.1$.}
\label{fig:imagenet64_grid}
\end{figure}

\end{document}